\title[Fast Mixing for Discrete Point Processes]{Fast Mixing for Discrete Point Processes\thanks{This is the full version of a paper in the 28th Annual Conference on Learning Theory (COLT), 2015.}}
\DeclareFontFamily{U}{mathx}{\hyphenchar\font45}
\DeclareFontShape{U}{mathx}{m}{n}{
  <5> <6> <7> <8> <9> <10>
  <10.95> <12> <14.4> <17.28> <20.74> <24.88>
  mathx10
}{}
\DeclareSymbolFont{mathx}{U}{mathx}{m}{n}
\DeclareMathSymbol{\bigtimes}{1}{mathx}{"91}
\begin{document}

\maketitle
\thispagestyle{empty}

\begin{abstract}
We investigate the systematic mechanism for designing fast mixing Markov chain Monte Carlo algorithms to sample from discrete point processes under the Dobrushin uniqueness condition for Gibbs measures. Discrete point processes are defined as probability distributions $\mu(S)\propto \exp(\beta f(S))$ over all subsets $S\in 2^V$ of a finite set $V$ through a bounded set function $f:2^V\rightarrow \mathbb{R}$ and a parameter $\beta>0$. A subclass of discrete point processes characterized by submodular functions (which include log-submodular distributions, submodular point processes, and determinantal point processes) has recently gained a lot of interest in machine learning and shown to be effective for modeling diversity and coverage.
We show that if the set function (not necessarily submodular) displays a natural notion of decay of correlation, then, for $\beta$ small enough, it is possible to design fast mixing Markov chain Monte Carlo methods that yield error bounds on marginal approximations that do not depend on the size of the set $V$. The sufficient conditions that we derive involve a control on the (discrete) Hessian of set functions, a quantity that has not been previously considered in the literature. We specialize our results for submodular functions, and we discuss canonical examples where the Hessian can be easily controlled.
\end{abstract}

\begin{keywords}
Discrete point processes, MCMC, fast mixing, submodular functions, decay of correlation, Hessian of set functions
\end{keywords}

\section{Introduction}

Probabilistic modeling and inference techniques have become essential tools for analyzing
data and making predictions in a variety of real-world settings.
Graphical models \citep{wainwright2008graphical} have provided an appealing framework to expressed dependencies among variables through a graph structure. A broad class of such models that have been widely used in machine learning is represented by Markov random fields, where the probability distribution of a collection of $n$ random variables $X:=(X^1,\ldots,X^n)$ is defined as a product of non-negative potentials $\phi_C$ over maximal cliques $C\subseteq G$ of an undirected graph $G=(V,E)$, i.e., $\mu(x):=\mathbf{P}(X=x) = 1/Z\prod_{C} \phi_C (x^C)$, where we adopted the notation $x^C:=\{x^i\}_{i\in C}$. Here, $Z$ is the normalization factor, often called the \textit{partition function}, and it is known to be hard to compute exactly \citep{JS93}, or even to approximate \citep{goldberg2007complexity}. Perhaps the most prominent example of Markov networks, with many applications in machine learning, is the pairwise Markov random field, also called Ising model,  where cliques are defined on edges between pairs of variables.

These examples can be seen as instances of a general class of probabilistic models that we refer to as \textit{discrete point processes}, which are defined as
\begin{equation}\label{eq:dis}
	\mu(S) = \frac{1}{Z} \exp(\beta f(S))
	\qquad \text{for } S\subseteq V,
\end{equation}
where $f:2^V\rightarrow \mathbb{R}$ is a bounded set function, $Z := \sum_{S\subseteq V} \exp(\beta f(S))$ is the partition function, and $\beta$ is a strictly positive real constant that parametrizes the distribution. Discrete point processes have been widely studied in mathematics and statistical physics \citep{daley2007introduction}[ch. 5], where they have been traditionally used to model particle processes and neural spiking activity, among many others. Note that distributions over subsets of $V$ are isomorphic to distributions of $n:=|V|$ binary random variables $X^1, \dots, X^n\in\{0,1\}$. 

A subclass of discrete point processes, referred to as log-submodular (log-supermodular) distributions, has recently been investigated in \cite{DK14}, where the set function $f$ is taken to be \textit{submodular} (respectively, \textit{supermodular}), i.e., characterized by the property that the difference in the value of the function when an element is added to a set, the so-called \emph{marginal gain}, decreases (increases) as the cardinality of the set increases. Throughout this paper, the discrete derivative, or marginal gain, of $f$ is defined as $\Delta_i f(S):=f(S\cup\{i\})-f(S)$. The function $f$ is submodular (supermodular) if for any $S\subseteq S'\subseteq V$ and any $i\in V\setminus S'$, it holds $\Delta_i f(S) \ge \Delta_i f(S')$ ($\Delta_i f(S) \le \Delta_i f(S')$).
Under some regularity conditions (discussed in \cite{DK14}), pairwise Markov random fields are also a special case of log-submodular point processes, as are determinantal point processes \citep{kulesza2012determinantal}, \citep{hough2006determinantal}.
Here, the submodular function is defined as $f(S) = \log\det(L_S)$ where $L\in \mathbb{R}^{n\times n}$ is a positive definite matrix and $L_S$ is the square submatrix of $L$ that is indexed by $S$. Determinantal point processes have been extensively used in physics and machine learning to model negative correlations, giving rise, for instance, to diverse sets of items in recommendations.
Another related subclass of discrete point processes, called submodular (supermodular) point processes, has been recently proposed in \cite{iyer2015-spps}, where $\mu(S)\propto f(S)$ with $f$ a non-negative submodular (supermodular) function.

The diminishing return property that characterizes submodularity --- which makes submodular functions suitable for applications in several fields, ranging from economics to machine learning --- has been extensively investigated in the domain of optimization (see \cite{krause14survey} for instance), but its role has yet to be established in the realm of probabilistic inference. \cite{iyer2015-spps} showed that, in general, computing the partition function in log-submodular distributions and submodular point processes requires exponential complexity. \cite{DK14} and \cite{iyer2015-spps} resort to variational approaches to approximate the partition function. 
In \cite{DK14} the authors provide upper and lower bounds based on sub- and super- gradients \citep{iyer2013fast}, showing that the log partition function $\log(Z)$ can be approximated within $O(n)$. In their setting, however, this implies that the error bounds that can be derived from their theory to approximate marginals of the type $\mu(\{S\subseteq V: S\ni i\})$, for a given $i\in V$, deteriorate exponentially with the size of the model $n=|V|$, as can be deduced from their experimental results.
In addition, the bounds that they consider depend on the \emph{curvature} $c(f)$ of the submodular function $f$, a quantity between $0$ and $1$ that characterizes the deviation from modularity ($f$ is said to be \emph{modular} if $\Delta_i f(S) = \Delta_i f(S')$ for all $S,S'\subseteq V$, $i\not\in S,S'$, and in this case $c(f)=0$). However, there are trivial examples with very little interactions between random variables (Section \ref{sec:Modular functions}) for which the inference problem has a straightforward solution but $c(f)=1$, so that the corresponding (lower/upped) bounds on the partition function in \cite{DK14} are unbounded and no useful inference can be deduced. In contrast to Markov random fields, whose partition function is typically intractable and hard to approximate, determinantal point processes admit an exact algorithm for marginalization, albeit in time cubic in the size $n$. In order to avoid this cost, recently \cite{K13} considered a Markov chain Monte Carlo (MCMC) algorithm to sample from determinant point processes. The author claims that this algorithm is fast mixing (i.e., the Markov chain gets arbitrarily close to equilibrium after a small, $O(n\log n)$, number of steps) with no additional assumptions on the model. However, the proofs of the results in this paper are wrong as they rely on ill-defined couplings between Markov chains.

In this paper we address the inference problem of approximating marginals of the type $\mu(\{S\subseteq V: S\ni i\})$ for discrete point processes defined in Eq.~\eqref{eq:dis} through MCMC methods. In Section \ref{sec:Fast mixing MCMC algorithms} we define the general class of local-update algorithms that we consider, and we present the main theoretical result for fast mixing, i.e., Theorem \ref{thm:fastMCMC}, which relies on the Dobrushin uniqueness condition for Gibbs measures \citep{Dob70}, \citep{Geo11}. In Section \ref{sec:Fast mixing local MCMC algorithms for discrete point processes} we analyze two specific algorithms within this class, Gibbs sampling and Metropolis-Hastings, and we show that if the set function $f$ satisfies some natural notion of \textit{decay of correlation}, then these algorithms are fast mixing and yield size-free error bounds on marginal approximations.
The decay of correlation property that we exploit concerns the decay of the absolute value of the \emph{difference} of marginal gains evaluated at sets differing only by a single element $j$, as a function of the element $i$ being added (in fact, here the role of $i$ and $j$ is symmetric). More precisely, this property is related to the second order derivatives $\Delta_j\Delta_i f(S) = \Delta_i f(S \cup \{j\}) - \Delta_i f(S) \equiv \Delta_i\Delta_j f(S)$, i.e., to the (discrete) \emph{Hessian} of the function $f$. If we define $M_{ij} := \max_{S\subseteq V: S\not\ni i,j} |\Delta_j\Delta_i f(S)|$ for each $i\neq j$, and $M_{ii}:=0$ for each $i$, we show  that the Gibbs sampler is fast mixing if the following condition holds
$$
	\alpha(\beta) \beta \| M \|_\infty \le \gamma < 1,
$$ 
where $\| M \|_\infty:=\max_{i\in V}\sum_{j\in V} M_{ij}$, $\alpha(\beta):= \max_{i\in V} \max_{S\subseteq V\setminus\{i\}} e^{-\beta \Delta_if(S)}$, and $\gamma\ge 0$ is a quantity that does not depend on $n$. If the set function is submodular, i.e., the distribution is log-submodular, then we can simplify this condition (Lemma \ref{lem:Gibbs sampler fast mixing}). To the best of our knowledge, these results are the first to emphasize the importance of the Hessian of set functions, a quantity that has not been previously investigated even in the optimization domain. 
Finally, in Section \ref{sec:applications} we specialize our results for a  number of canonical examples of submodular functions (facility location, cut function, log determinant functions leading to determinantal point processes, and decomposable functions). These examples attest that our general criterion, which a priori involves a combinatorial optimization to compute each term $M_{ij}$, can often be reduced to a simple-to-check condition. Proofs are given in Appendix \ref{sec:Proofs of MCMC convergence results} (theory) and Appendix \ref{sec:Proofs of Applications results} (applications). As a final remark, we should highlight that submodularity (supermodularity), which is equivalent to $\Delta_j\Delta_i f(S)\leq 0$ ($\Delta_j\Delta_i f(S)\ge 0$) for any $i,j\in V$, $i\neq j$, $S\subseteq V$, $S\not\ni i,j$, is not sufficient to guarantee fast mixing, as displayed by the different convergence behaviors of the Glauber dynamics for Ising models with respect to different values of the inverse temperature $\beta$. See \cite{mossel2013} and references therein.\\

\textbf{Notation.}
In this paper we adopt the usual vector/matrix notation for distributions, kernels, and functions defined on finite sets. Given two finite sets $\mathbb{X}$ and $\mathbb{Y}$ with respective cardinality $|\mathbb{X}|$ and $|\mathbb{Y}|$, we interpret a probability distribution $\rho$ on $\mathbb{X}$ as $|\mathbb{X}|$-dimensional row vector, a kernel $T$ from $\mathbb{X}$ to $\mathbb{Y}$ as a matrix in $[0,1]^{|\mathbb{X}|\times |\mathbb{Y}|}$, and a function $h:\mathbb{Y}\rightarrow \mathbb{R}$ as a $|\mathbb{Y}|$-dimensional column vector. Hence, we write $\rho T h := \sum_{x\in \mathbb{X},y\in \mathbb{Y}}\rho(x) T(x,y) h(y)$. For each $x\in\mathbb{X}$, we write $T_x$ to indicate the probability distribution $T_x:y\in\mathbb{Y} \rightarrow T_x(y):=T(x,y)$, and for $m \ge 0$ we write $T^m$ to indicate the $m$-th power of the matrix $T$. Given $A\subseteq \mathbb{Y}$, we define the indicator function $1_A$ as $1_A(y):=1$ if $y\in A$, $1_A(y):=0$ if $y\not\in A$. Clearly, $\rho (A)=\rho 1_A$. For $y\in\mathbb{Y}$, we will also use the notation $1_y$ to mean $1_{\{y\}}$.
If $V$ is a finite set, given $x=(x^i)_{i\in V}$, we write $x^S:=(x^i)_{i\in S},$ for $S\subseteq V$. If $x,y\in\mathbb{R}$, we use $x\land y := \min\{x,y\}$ and $x\vee y := \max\{x,y\}$.

\section{Fast mixing MCMC algorithms for discrete point processes}\label{sec:Fast mixing MCMC algorithms}

Throughout this paper, let $V$ be a finite set with cardinality $n:=|V|$. Let $\mathbb{S}_0:=\{0,1\}$ and define $\mathbb{S}:=\bigtimes_{i\in V} \mathbb{S}_0= \{0,1\}^n$. Given $\beta >0$, we consider the following probability distribution $\mu$ on $\mathbb{S}$
\begin{align}
	\mu(x) :=
	\frac{e^{\beta f(x)}
	}
	{\sum_{x\in \mathbb{S}} e^{\beta f(x)}},
	\label{basic model}
\end{align}
for $x\in\mathbb{S}$, where $f:\mathbb{S}\rightarrow\mathbb{R}$ is a given bounded function. In the following we will also consider the isomorphic description of the function $f$ given by the set function $f^\star:2^V\rightarrow\mathbb{R}$ defined as follows, for each $S\subseteq V$:
$
	f^\star(S) := f(x(S)),
$
where $x(S)\in\mathbb{S}$ is defined as $x(S)^i=1 \text{ if } i\in S, x(S)^i=0 \text{ if } i\not\in S$. With an overload of notation, henceforth we refer to $f^\star$ as $f$, leaving to the context the determination of what is meant. In this paper we address the problem of probabilistic inference for discrete point processes defined in \eqref{basic model}, that is, the problem of computing guaranteed approximations to marginals probabilities of the type
\begin{align}
	\mu(\{x\in\mathbb{S}:x^i= 1 \ \forall i\in S\})
	=
	\sum_{x\in \mathbb{S}: x^i=1, i\in S} \mu(x),
	\label{goal:marginals}
\end{align}
for a given $S\subseteq V$.
Note that, in general, computing \eqref{goal:marginals} exactly is hard, as computing the normalization function in \eqref{basic model} is \#P-complete, see \cite{JS93}. Hence, we need to resort to approximation schemes.
Our goal is to investigate the properties of the function $f$ that make it possible to design time-uniform Markov chains that quickly converge to $\mu$. More specifically, we want to design a transition kernel $T$ on $\mathbb{S}$ such that $\mu T = \mu$, and so that, for any distribution $\rho$ on $\mathbb{S}$ and any function $h:\mathbb{S}\rightarrow\mathbb{R}$, $\rho T^mh$ converges exponentially fast to $\mu h$, as $m$ increases. As $|\mathbb{S}|=2^n$, a general transition kernel $T$ on $\mathbb{S}$ is a matrix with at most $2^n(2^n-1)$ degrees of freedom (recall that for each $x\in\mathbb{S}$ it has to hold $\sum_{z\in\mathbb{S}} T(x,z)=1$). In particular, for each $x\in\mathbb{S}$, $T_x$ is a distribution with $2^{n-1}$ degrees of freedom, which can be difficult to sample from if $n$ is large. To avoid this exponential burden with the cardinality of $\mathbb{S}$, we restrict our attention on Markov chains that are defined as combinations of \emph{local-update} probability kernels $K^{[i]}$ from $\mathbb{S}$ to $\mathbb{S}_0$ that we assume we can easily sample from (in particular, this implies that each $K^{[i]}$ should not depend on the normalization function in \eqref{basic model}). 
Hence, for each $i\in V$ define the transition kernel $T^{[i]}$ on $\mathbb{S}$ as
\begin{align}
	T^{[i]}(x,z):=K^{[i]}(x,z^i) 1_{x^{V\setminus \{i\}}}(z^{V\setminus \{i\}}),
	\label{transition kernel}
\end{align}
where, for each $i\in V$, $K^{[i]}$ is a probability kernel from $\mathbb{S}$ to $\mathbb{S}_0$ so that $T^{[i]}$ leaves $\mu$ invariant, namely, $\mu T^{[i]} =\mu$. 
Label the elements of $V$ as $V=\{i_1,\ldots,i_n\}$. We consider the two chains:
\begin{align}
	T_{s} &:= T^{[i_1]} \cdots T^{[i_n]},
	&\text{(systematic scan)}
	\label{MCMC-systematic}\\
	T_{r} &:= \left(\frac{1}{n} \sum_{i\in V} T^{[i]}\right)^n.
	&\text{(random scan)}
	\label{MCMC-random}
\end{align}
Markov chains described by these type of transition kernels are usually referred to as \emph{systematic scan} \eqref{MCMC-systematic} and \emph{random scan} \eqref{MCMC-random} Markov chain Monte Carlo (MCMC)\footnote{Typically in the literature (see \cite{dyer2009}, for instance) the random scan MCMC sampler is defined as $S:=\frac{1}{n} \sum_{i\in V} T^{[i]}$, instead of $T_r=S^n$ as in \eqref{MCMC-random}. Our choice in the present context is motivated by the fact that we want to compare random and systematic scan. Note, in fact, that a single application of the kernel $T_s$ in \eqref{MCMC-systematic} involves updating all coordinated $i_1,\ldots,i_n$, while a single application of $S$ involves updating (uniformly at random) only one coordinate $i\in V$. This is why the right scale to make the comparison involves $n$ iterations of $S$.}. Respectively, they give rise to Algorithm \ref{alg:generalMCMC-systematic} and Algorithm \ref{alg:generalMCMC-random}, when the chains are run for $m\ge 1$ \emph{sweeps} with initial distribution $\rho$.

\begin{algorithm}[H]
	Sample $X\in\mathbb{S}$ from the distribution $\rho$;\\
	\For{$k=1,\ldots,m$}{
	\For{$j=1,\ldots,n$}{
		Draw $Z^{i_j}$ from the distribution $K^{[i_j]}_x$;\\
		Set $X^{i_j} \leftarrow Z^{i_j}$;
	}
	}
	\textbf{Output:} $X=(X^i)_{i\in V}$ that is distributed according to $\rho T_s^m$.
	\caption{Systematic scan MCMC sampler}
	\label{alg:generalMCMC-systematic}
\end{algorithm}

\begin{algorithm}[H]
	Sample $X\in\mathbb{S}$ from the distribution $\rho$;\\
	\For{$k=1,\ldots,nm$}{
	Sample $i\in V$ uniformly;\\
	Draw $Z^i$ from the distribution $K^{[i]}_x$;\\
	Set $X^i \leftarrow Z^i$;
	}
	\textbf{Output:} $X=(X^i)_{i\in V}$ that is distributed according to $\rho T_r^m$.
	\caption{Random scan MCMC sampler}
	\label{alg:generalMCMC-random}
\end{algorithm}

The following theorem describes the convergence behavior of the MCMC algorithms $T_s$ and $T_r$ under the so-called \emph{Dobrushin uniqueness condition}. Since the seminal work in \cite{Dob70} several authors have presented different approaches to establish convergence bounds of the type in Theorem \ref{thm:fastMCMC}. We refer to \cite{dyer2009} and references therein for a review of results that address fast mixing within the Dobrushin uniqueness framework. Theorem \ref{thm:fastMCMC} represents the building block for the theory that will be developed in the next sections.

\begin{theorem}[Local-update MCMC algorithms for discrete point processes]\label{thm:fastMCMC}
For each $i,j\in V$, define the \emph{Dobrushin coefficients} as:
\begin{align}
	C_{ij}
	:= \max_{x\in\mathbb{S}}
	| K^{[i]}_{x^{V\setminus\{j\}}0^j}(\{0\}) - K^{[i]}_{x^{V\setminus\{j\}}1^j}(\{0\}) |.
	\label{def:dobrushincoefficients}
\end{align}
Let $R\in\mathbb{R}^{n\times n}$ such that $C\le R$, element-wise. Assume that the \emph{Dobrushin uniqueness condition} holds, namely,
\begin{align}
	\| R \|_\infty
	:= \max_{i\in V} \sum_{j\in V} R_{ij} \le \gamma < 1.
	\label{def:Dobrushin condition}
\end{align}
Then, for any distribution $\rho$ on $\mathbb{S}$, any natural number $m\ge 0$, and any function $h:\mathbb{S}\rightarrow \mathbb{R}$, we have
\begin{align}
	| \rho T_s^m h - \mu h | \le 
	\gamma^m \sum_{i\in V} \delta_i (h),
	\qquad
	| \rho T_r^m h - \mu h | \le 
	\lambda^m \sum_{i\in V} \delta_i (h),
	\label{fast}
\end{align}
with $\lambda := e^{\gamma-1}<1$ and
$
	\delta_i (h) := \max_{x\in\mathbb{S}} 
	| h(x^{V\setminus\{i\}}0^i) - h(x^{V\setminus\{i\}}1^i) |.
$
\end{theorem}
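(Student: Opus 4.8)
The plan is to reduce everything to a contraction estimate for the seminorm $h\mapsto\sum_{i\in V}\delta_i(h)$ under the action of the kernels on functions. First I would record two elementary reductions. Since each $T^{[i]}$ leaves $\mu$ invariant, so do $T_s$, $T_r$ and all their powers, whence $\mu h=\mu T_s^m h$ and $\rho T_s^m h-\mu h=(\rho-\mu)(T_s^m h)$; and for any function $g$ and any two probability distributions, $|(\rho-\mu)g|\le \mathrm{osc}(g):=\max_x g(x)-\min_x g(x)$ (subtract $\min_x g$ and note both averages lie in $[0,\mathrm{osc}(g)]$). Flipping coordinates one at a time along a path from a minimizer to a maximizer of $g$ gives $\mathrm{osc}(g)\le\sum_{i\in V}\delta_i(g)$. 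Hence it suffices to prove $\sum_i\delta_i(T_s^m h)\le\gamma^m\sum_i\delta_i(h)$ and $\sum_i\delta_i(T_r^m h)\le\lambda^m\sum_i\delta_i(h)$.

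The core is a single-site contraction bound: writing $\delta(h):=(\delta_j(h))_{j\in V}$ as a column vector, I claim $\delta(T^{[i]}h)\le A^{[i]}\delta(h)$ element-wise, where $A^{[i]}$ is the identity matrix with its $i$-th column replaced by $(C_{i1},\dots,C_{in})^\top$. To establish this I would use $(T^{[i]}h)(x)=K^{[i]}_x(\{0\})\,h(x^{V\setminus\{i\}}0^i)+K^{[i]}_x(\{1\})\,h(x^{V\setminus\{i\}}1^i)$ and compare this at two configurations differing only in coordinate $j$. For $j\neq i$ the two update probabilities differ by at most $C_{ij}$ while the fixed-$i$ evaluations of $h$ differ by at most $\delta_j(h)$; the identity $pa+(1-p)b-[p'a'+(1-p')b']=p(a-a')+(1-p)(b-b')+(p-p')(a'-b')$ then gives $\delta_j(T^{[i]}h)\le\delta_j(h)+C_{ij}\delta_i(h)$. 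For $j=i$ both evaluations are convex combinations of the \emph{same} two numbers $h(x^{V\setminus\{i\}}0^i),h(x^{V\setminus\{i\}}1^i)$ with weights differing by at most $C_{ii}$, giving $\delta_i(T^{[i]}h)\le C_{ii}\delta_i(h)$. Getting this case analysis and the role of the diagonal term $C_{ii}$ exactly right is where I expect the main difficulty to lie.

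Iterating the single-site bound, using $A^{[i]}\ge 0$ to preserve element-wise inequalities, gives $\delta(T_s^m h)\le(A^{[i_1]}\cdots A^{[i_n]})^m\delta(h)$, so it remains to control column sums: I claim $\mathbf{1}^\top A^{[i_1]}\cdots A^{[i_n]}\le\gamma\,\mathbf{1}^\top$. The key observation is that multiplying a non-negative row vector $u$ on the right by $A^{[i]}$ leaves every coordinate unchanged except the $i$-th, which becomes $\sum_j u_j C_{ij}$. Starting from $u=\mathbf{1}^\top$ and processing $i_1,\dots,i_n$ (each coordinate of $V$ exactly once), an induction shows every coordinate stays in $[0,1]$ and, once updated, is at most $\sum_j C_{ij}\le\sum_j R_{ij}\le\gamma$; after all $n$ updates every coordinate is $\le\gamma$. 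Since $\mathbf{1}^\top A\le\gamma\,\mathbf{1}^\top$ and $A\ge 0$ propagate to $\mathbf{1}^\top A^m\le\gamma^m\mathbf{1}^\top$, summing yields $\sum_i\delta_i(T_s^m h)\le\gamma^m\sum_i\delta_i(h)$, and the oscillation bound finishes the systematic-scan estimate.

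For the random scan I would use subadditivity of each $\delta_j$ to obtain $\delta(Sh)\le\bar A\,\delta(h)$ with $\bar A:=\tfrac1n\sum_i A^{[i]}$, whose $k$-th column sum equals $1-\tfrac1n+\tfrac1n\sum_j C_{kj}\le 1-\tfrac{1-\gamma}{n}$. Since $T_r^m=S^{nm}$, the same column-sum propagation gives $\sum_i\delta_i(T_r^m h)\le\bigl(1-\tfrac{1-\gamma}{n}\bigr)^{nm}\sum_i\delta_i(h)$, and the inequality $1-x\le e^{-x}$ bounds the prefactor by $e^{-(1-\gamma)m}=\lambda^m$. Combined with the oscillation inequality this yields the random-scan estimate and completes the proof.
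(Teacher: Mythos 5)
Your proposal is correct and follows essentially the same Wasserstein-matrix route as the paper: your single-site estimate $\delta_j(T^{[i]}h)\le \delta_j(h)1_{i\neq j}+C_{ij}\delta_i(h)$ is exactly the paper's bound (your $A^{[i]}$ is the transpose of its $W^{[i]}$), and the reduction via oscillation and telescoping is the same. The only difference is that your coordinate-by-coordinate induction gives a self-contained proof of the key norm bound $\| W_s \|_\infty \le \| C \|_\infty \le \gamma$, which the paper instead cites from Corollary 24 of \cite{dyer2009}.
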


\begin{remark}[On fast mixing]\label{rem:on fast mixing}
If $\gamma$ in the Dorbushin uniqueness condition \eqref{def:Dobrushin condition} does not depend on $n$, then the results in Theorem \ref{thm:fastMCMC} imply that the Markov chains defined by the transition kernels $T_s$ and $T_r$ are fast mixing. Recall that for each $\varepsilon > 0$ the mixing time $\tau(\varepsilon)$ of a Markov chain with transition kernel $T$ and unique invariant distribution $\mu$ is defined as \citep{levinmarkov}
$$
	\tau(\varepsilon) := 
	\min\left\{m\ge 0 : 
	\sup_{\rho \text{ distribution on } \mathbb{S}} 
	\|
	\rho T^m
	-
	\mu
	\|_{TV}
	\le \varepsilon
	 \right\},
$$
where $\| \rho - \rho' \|_{TV}:=\max_{A\subseteq\mathbb{S}} | \rho (A) - \rho' (A) |$ is the total variation distance between distributions $\rho$ and $\rho'$ on $\mathbb{S}$.
Clearly, from \eqref{fast} it follows that 
$
	|
	\rho T_s^m (A)
	-
	\mu (A)
	|
	\le 
	n \gamma^m
$
and
$
	|
	\rho T_r^m (A)
	-
	\mu (A)
	|
	\le 
	n \lambda^m,
$
for each $A\subseteq \mathbb{S}$.
Hence, we can easily derive the following upper bounds for the mixing time of the systematic scan and random scan Markov chains, respectively,
$$
	\tau_s(\varepsilon)
	\le \left\lceil\frac{\log(n \varepsilon^{-1})}{1- \gamma} \right\rceil,
	\qquad
	\tau_r(\varepsilon)
	\le \left\lceil\frac{\log(n \varepsilon^{-1})}{1- \lambda} \right\rceil,
$$
which show that the Markov chains are fast mixing, that is, their mixing time is upper bounded by a quantity that scales only logarithmically with the size $n$ of the set $V$.\footnote{Typically, fast mixing is defined when the mixing time is $O(n\log n)$, not $O(\log n)$. The difference in our setting is due to the definitions of the chains $T_s$ and $T_r$, which involve a full sweep over $n$ variables.}
In fact, taking the case of the systematic scan Markov chain, for instance, we have
\begin{align*}
	\tau(\varepsilon)
	\le \min\left\{m\ge 0 : 
	n \gamma^m 
	\le \varepsilon
	 \right\}
	= \min\left\{m\ge 0 : 
	m \ge \frac{\log(n \varepsilon^{-1})}{-\log \gamma}
	 \right\}
	 \le \left\lceil\frac{\log(n \varepsilon^{-1})}{1- \gamma} \right\rceil,
\end{align*}
where in the last inequality we used that $\log x \le x-1$ for each $x\in\mathbb{R}$.
\end{remark}

\begin{remark}[On probabilistic inference]\label{rem:On probabilistic inference}
If $\gamma$ in the Dorbushin uniqueness condition \eqref{def:Dobrushin condition} does not depend on $n$, then Theorem \ref{thm:fastMCMC} yields that the inference problem of approximating marginals of the type \eqref{goal:marginals} can be efficiently addressed via MCMC methods. In fact, choosing $h=1_A$, with $A=\{x\in\mathbb{S}:x^i=1, i\in S\}$ for a certain $S\subseteq V$, Theorem \ref{thm:fastMCMC} provides the following exponentially decreasing error bounds, for any $m\ge 0$ and any distribution $\rho$ on $\mathbb{S}$:
\begin{align*}
	| \rho T_s^m (A) - \mu (A) | \le 
	|S| \gamma^m,
	\qquad
	| \rho T_r^m (A) - \mu (A) | \le 
	|S| \lambda^m,
\end{align*}
which do not depend on $n$. As a consequence, if $X_1,\ldots,X_N$ are independent random variables distributed according to $\rho T_s^m$, generated as prescribed in Algorithm \ref{alg:generalMCMC-systematic} (analogous results follow for random scan), then $\hat \mu_A:=\frac{1}{N}\sum_{k=1}^N 1_A(X_k)$ is a biased estimator of $\mu (A)$ with the typical Monte Carlo mean square error bias/variance decomposition:
$$
	\mathbf{E} [ (\hat \mu_A - \mu (A))^2 ] =
	\underbrace{(\mathbf{E} [\hat \mu_A] - \mu (A) )^2}_{\text{bias}^2}
	+
	\underbrace{\mathbf{E} [(\hat \mu_A - \mathbf{E}[\hat \mu_A] )^2]}_{\text{variance}}
	\le (|S| \gamma^m)^2 + \frac{1}{N}.
$$
We stress once again that under the current assumptions this error bound does not depend on the set size $n$. This is in sharp contrast to the upper bounds for marginals produced by the theory developed for the variational methods in \cite{DK14}; these bounds, in fact, deteriorate exponentially with $n$, as it can be deduced by the very experimental results presented by the authors.
\end{remark}

\begin{remark}[On block updates]
Using the results in \cite{RvH14} it is possible to generalize Theorem \ref{thm:fastMCMC} for MCMC algorithms with blocks updates, in the spirit of \cite{Wei05}, that is, when the transition kernels $T^{[i]}$, $i\in V$, in \eqref{transition kernel} are replaced by
\begin{align*}
	T^{[S]}(x,z):=K^{[S]}(x,z^S) 1_{x^{V\setminus S}}(z^{V\setminus S}),
\end{align*}
for each \emph{block} $S\subseteq V$ in a given family $\mathcal{S}$ satisfying $\bigcup_{S\in\mathcal{S}} S = V$ (for instance, $\mathcal{S}=\{S\subseteq V: |S|=k\}$, for some $k\ge 1$), where $K^{[S]}$ is a probability kernel from $\mathbb{S}$ to $\bigtimes_{i\in S} \mathbb{S}_0= \{0,1\}^{|S|}$. In this case, for instance, a single step of the block-update random scan Gibbs sampler would read
$
	\frac{1}{|\mathcal{S}|} \sum_{S\in \mathcal{S}} T^{[S]},
$
which generalizes
$
	\frac{1}{n} \sum_{i \in V} T^{[i]}
$
in \eqref{MCMC-random}.
The new version of Theorem \ref{thm:fastMCMC} would yield weaker sufficient conditions for fast mixing involving higher values of the parameter $\beta$, in the spirit of \cite{DS85}. However, these conditions would be more convoluted and difficult to analyze, and they would involve a control on the ``block" derivatives of the form $\Delta_J\Delta_i f(S)$, with $J\subseteq V$. As the focus of the present work lies on investigating the \emph{basic} properties of the set function $f$ that makes it possible to efficiently address the inference problem in  \eqref{basic model} via MCMC methods, we limit our analysis to local sampling schemes with single site updates.
\end{remark}

\section{Condition for fast mixing and decay of correlations}\label{sec:Fast mixing local MCMC algorithms for discrete point processes}

In this section we introduce two of the most popular MCMC algorithms that are used in the literature within the local-update framework described by \eqref{transition kernel}: Gibbs sampler and Metropolis-Hastings \citep{asmussen2007stochastic}.
While many variants of these algorithms can also be considered, we restrict our analysis to their most basic implementations, as our goal is to investigate the fundamental principles behind fast mixing for discrete point processes.
For both of these algorithms we compute the Dobrushin coefficients \eqref{def:dobrushincoefficients}, and we provide a comparison between them in Lemma \ref{lem:comparison GS MH} below.
Then, we apply  Theorem \ref{thm:fastMCMC} to the analysis of the Gibbs sampler algorithm, and we present sufficient conditions for fast mixing, particularly in the case when $f$ is submodular, Lemma \ref{lem:Gibbs sampler fast mixing} below.
Henceforth, for each $x\in\mathbb{S}$ let $S(x)\subseteq V$ be defined as follows: $i\in S \text{ if } x(S)^i=1, i\not\in S \text{ if } x(S)^i=0$.

\begin{lemma}[Gibbs sampler]\label{lem:Gibbs sampler}
For each $i\in V$, in the transition kernel $T^{[i]}$ in \eqref{transition kernel} choose
\begin{align}
	K^{[i]}(x,z^i)
	&= \mathbf{E}[X^i= z^i | X^{V\setminus \{i\}} = x^{V\setminus \{i\}}]
	=
	\begin{cases}
	\frac{1}{1+e^{\beta \Delta_i f( S(x^{V\setminus \{i\}}0^i))}} &\text{if }z^i=0,\\
	\frac{e^{\beta \Delta_i f( S(x^{V\setminus \{i\}}0^i))}}
	{1+e^{\beta \Delta_i f( S(x^{V\setminus \{i\}}0^i))}} &\text{if }z^i=1,
	\end{cases}
	\label{def:Gibbssampler}
\end{align}
where the random variable $X$ has distribution $\mu$. Then, for each $i\in V$ we have $\mu T^{[i]} =\mu$, and the Dobrushin coefficients \eqref{def:dobrushincoefficients}  read, for each $i,j\in V$,
$$
	C_{ij}
	:=
	\begin{cases}
	0 &\text{if } i= j,\\
	\displaystyle
	\max_{S\subseteq V : S\not\ni i,j}
	\frac{|e^{\beta \Delta_i f(S)}-e^{\beta \Delta_i f(S\cup\{j\})}|}
	{(1+e^{\beta \Delta_i f(S)})(1+e^{\beta \Delta_i f(S\cup\{j\})})}
	&\text{if } i\neq j.
	\end{cases}
$$
\end{lemma}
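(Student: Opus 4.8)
The plan is to establish the lemma in three stages: first confirm that the displayed formula for $K^{[i]}$ really is the conditional distribution of $X^i$ under $\mu$, then verify invariance, and finally evaluate the Dobrushin coefficients from their definition \eqref{def:dobrushincoefficients}.

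First I would verify the explicit expression for $K^{[i]}$. Writing $X\sim\mu$ with $\mu(x)\propto e^{\beta f(x)}$, the conditional probability $\mathbf{P}(X^i=1\mid X^{V\setminus\{i\}}=x^{V\setminus\{i\}})$ is the ratio of $e^{\beta f(x^{V\setminus\{i\}}1^i)}$ to the sum of the two terms $e^{\beta f(x^{V\setminus\{i\}}0^i)}+e^{\beta f(x^{V\setminus\{i\}}1^i)}$; dividing numerator and denominator by $e^{\beta f(x^{V\setminus\{i\}}0^i)}$ turns the exponent into $\beta(f(x^{V\setminus\{i\}}1^i)-f(x^{V\setminus\{i\}}0^i))$. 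Recognizing the set $S:=S(x^{V\setminus\{i\}}0^i)$, which by construction does not contain $i$, this exponent is exactly $\beta\Delta_i f(S)$, yielding the stated expression (the $z^i=0$ case being the complement).

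Next I would prove $\mu T^{[i]}=\mu$. For any $z\in\mathbb{S}$, the indicator $1_{x^{V\setminus\{i\}}}(z^{V\setminus\{i\}})$ in \eqref{transition kernel} restricts the sum $\sum_{x}\mu(x)T^{[i]}(x,z)$ to the two configurations agreeing with $z$ off coordinate $i$. Since $K^{[i]}$ depends on its first argument only through the $V\setminus\{i\}$ coordinates, it can be pulled out of the sum as $K^{[i]}(z,z^i)=\mathbf{P}(X^i=z^i\mid X^{V\setminus\{i\}}=z^{V\setminus\{i\}})$, and the remaining sum is $\mu(\{x:x^{V\setminus\{i\}}=z^{V\setminus\{i\}}\})$. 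Multiplying these two recovers $\mu(z)$ by the definition of conditional probability, which gives invariance.

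Finally I would compute the Dobrushin coefficients. For $i=j$, the two arguments $x^{V\setminus\{i\}}0^i$ and $x^{V\setminus\{i\}}1^i$ differ only in coordinate $i$, on which $K^{[i]}$ does not depend; hence the two kernel values coincide and $C_{ii}=0$. For $i\neq j$, I would evaluate $K^{[i]}_{y}(\{0\})=1/(1+e^{\beta\Delta_i f(S(y^{V\setminus\{i\}}0^i))})$ at the two arguments $y=x^{V\setminus\{j\}}0^j$ and $y=x^{V\setminus\{j\}}1^j$: both force $i$ to $0$ in the relevant set, so writing $S$ for the set obtained from $x^{V\setminus\{j\}}0^j$ by also setting coordinate $i$ to $0$ (so that $S\not\ni i,j$), the first argument contributes $\Delta_i f(S)$ and the second $\Delta_i f(S\cup\{j\})$. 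Subtracting the two rational expressions over their common denominator gives the displayed fraction. The only point needing a little care is the maximization: since the kernel values depend on $x$ only through $S\subseteq V\setminus\{i,j\}$, and every such $S$ is realized by some $x\in\mathbb{S}$, the maximum over $x$ collapses to a maximum over $S\subseteq V$ with $S\not\ni i,j$, which is the claimed formula. There is no genuine obstacle here; the argument is bookkeeping around the configuration-to-set correspondence, and the step most prone to error is keeping straight which configuration maps to $S$ versus $S\cup\{j\}$.
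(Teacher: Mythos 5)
Your proposal is correct and follows essentially the same route as the paper: the invariance argument via the conditional-probability structure of $K^{[i]}$ (the paper phrases it as the tower property, you write out the same computation at the level of sums), and the Dobrushin coefficients by putting the two kernel values over a common denominator and identifying the configurations with the sets $S$ and $S\cup\{j\}$. Your explicit verification of the conditional formula, of the $i=j$ case, and of the collapse of the maximum over $x$ to a maximum over $S\not\ni i,j$ are details the paper leaves implicit, but they introduce nothing new.
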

	
\begin{lemma}[Metropolis-Hastings]\label{lem:Local Metropolis-Hastings}
For each $i\in V$, in the transition kernel $T^{[i]}$ in \eqref{transition kernel} choose
\begin{align}
	K^{[i]}(x,z^i) =
	\begin{cases}
	\alpha^i(x) &\text{if } z^i \neq x^i,\\
	1-\alpha^i(x)
	&\text{if } z^i =x^i,
	\end{cases}
	\qquad
	\alpha^i(x)
	:=
	\begin{cases}
	1 \land e^{-\beta\Delta_i f(S(x^{V\setminus \{i\}} 0^i))}
	&\text{if } x^i=1,\\
	1 \land e^{\beta\Delta_i f(S(x^{V\setminus \{i\}} 0^i))}
	&\text{if } x^i=0.
	\end{cases}
	\label{def:MH}
\end{align}
Then, for each $i\in V$ we have $\mu T^{[i]} =\mu$, and the Dobrushin coefficients \eqref{def:dobrushincoefficients}  read, for each $i,j\in V$,
\begin{align*}
	\widetilde C_{ij}\!:=\!
	\begin{cases}
	\displaystyle\max_{S\subseteq V : S\not\ni i} e^{- \beta |\Delta_i f(S)| }
	&\!\text{if } i= j,\\
	\displaystyle \max_{S\subseteq V : S\not\ni i,j}
	\left| 1 \!\land\! e^{\beta \Delta_i f(S)}
	\!-\! 1\land e^{\beta \Delta_i f(S\cup\{j\})} \right|
	\!\vee\!
	\left| 1 \!\land\! e^{-\beta \Delta_i f(S)}
	\!-\! 1 \!\land\! e^{-\beta \Delta_i f(S\cup\{j\})} \right| 
	&\!\text{if } i\neq j.
	\end{cases}
\end{align*}
\end{lemma}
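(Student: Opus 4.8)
The plan is to establish the two claims separately: first the invariance $\mu T^{[i]}=\mu$, then the explicit form of the Dobrushin coefficients $\widetilde C_{ij}$. For invariance I would verify the stronger property of reversibility (detailed balance) with respect to $\mu$, which immediately yields $\mu T^{[i]}=\mu$. Since $T^{[i]}$ alters only coordinate $i$, detailed balance is nontrivial only for the pair of configurations $x_0:=x^{V\setminus\{i\}}0^i$ and $x_1:=x^{V\setminus\{i\}}1^i$ that agree off coordinate $i$. Writing $S:=S(x_0)$ (so that $i\notin S$), the key observations are that $\mu(x_1)/\mu(x_0)=e^{\beta\Delta_i f(S)}$ and that the acceptance probabilities out of $x_0$ and $x_1$ are $1\land e^{\beta\Delta_i f(S)}$ and $1\land e^{-\beta\Delta_i f(S)}$ respectively. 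Detailed balance then reduces to the elementary identity $1\land e^{a}=e^{a}\,(1\land e^{-a})$ with $a:=\beta\Delta_i f(S)$, which I would check by splitting on the sign of $a$.

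For the Dobrushin coefficients, the first step is to compute $K^{[i]}_x(\{0\})$ as an explicit function of $x$, distinguishing the two cases by the current value $x^i$: if $x^i=0$ then a move to $0$ is a rejection, giving $K^{[i]}_x(\{0\})=1-(1\land e^{\beta\Delta_i f(S)})$, whereas if $x^i=1$ then a move to $0$ is an accepted flip, giving $K^{[i]}_x(\{0\})=1\land e^{-\beta\Delta_i f(S)}$, where throughout $S:=S(x^{V\setminus\{i\}}0^i)$. With these two formulas in hand, the remainder is a bookkeeping computation of the difference appearing in \eqref{def:dobrushincoefficients}.

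For the diagonal case $i=j$, the two configurations $x^{V\setminus\{i\}}0^i$ and $x^{V\setminus\{i\}}1^i$ share the same $S$ but fall into the two different branches above, so the difference equals $|\,1-(1\land e^{\beta\Delta_i f(S)})-(1\land e^{-\beta\Delta_i f(S)})\,|$. I would again split on the sign of $\Delta_i f(S)$: in either sign regime one of the two minima equals $1$ and the other equals the relevant exponential, and both regimes collapse to the single clean expression $e^{-\beta|\Delta_i f(S)|}$. Maximizing over $x$, equivalently over $S\not\ni i$, yields the stated $\widetilde C_{ii}$.

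For the off-diagonal case $i\neq j$, coordinate $i$ is unchanged between the two configurations being compared, so I would condition on its common value $x^i$. Writing $S$ for the set determined by the remaining coordinates (with $i,j\notin S$), flipping coordinate $j$ replaces $S$ by $S\cup\{j\}$ in the relevant marginal gain. When $x^i=0$ both terms come from the rejection branch and the difference telescopes to $|\,1\land e^{\beta\Delta_i f(S)}-1\land e^{\beta\Delta_i f(S\cup\{j\})}\,|$; when $x^i=1$ both come from the acceptance branch and give $|\,1\land e^{-\beta\Delta_i f(S)}-1\land e^{-\beta\Delta_i f(S\cup\{j\})}\,|$. Since the Dobrushin coefficient maximizes over all $x$, in particular over $x^i\in\{0,1\}$, these two contributions combine via the maximum $\vee$, producing exactly the claimed formula for $\widetilde C_{ij}$. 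I expect the main obstacle to be notational rather than conceptual: carefully tracking how the conditioning configuration determines $S$ (and whether it contains $j$), and recognizing that the case split on $x^i$ is precisely what generates the $\vee$, are the places where errors most easily creep in, while the sign-based collapse in the diagonal case is the one genuinely clever simplification.
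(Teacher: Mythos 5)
Your proposal is correct and follows essentially the same route as the paper: invariance via detailed balance (the paper likewise reduces it to $\alpha^i(x)=1\land\mu(x^{V\setminus\{i\}}(1-x^i)^i)/\mu(x)$), and the Dobrushin coefficients via the explicit form of $K^{[i]}_x(\{0\})$ with a case split on $x^i$ that produces the $\vee$ off the diagonal and the sign-collapse to $e^{-\beta|\Delta_i f(S)|}$ on the diagonal. No gaps.
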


The implementation of the Gibbs sampler \eqref{def:Gibbssampler} and Metropolis-Hastings \eqref{def:MH} is given, respectively, in Algorithm \ref{alg:gibbssampler} and Algorithm \ref{alg:Metropolis-Hastings} in Appendix \ref{sec:Proofs of MCMC convergence results} (we only present the random scan versions from Algorithm \ref{alg:generalMCMC-random}; the systematic scan versions can be obtained analogously from Algorithm \ref{alg:generalMCMC-systematic}).

The following Lemma compares the Dobrushin coefficients for the Gibbs sampler, Lemma \ref{lem:Gibbs sampler}, and the Metropolis-Hastings Markov chain, Lemma \ref{lem:Local Metropolis-Hastings}.

\begin{lemma}[Comparison of Dobrushin coefficients]\label{lem:comparison GS MH}
For each $i,j\in V, i\neq j$, we have
$$
	0=C_{ii} \le \widetilde C_{ii},
	\qquad \frac{1}{4} \widetilde C_{ij} \le C_{ij} \le \widetilde C_{ij}.
$$
\end{lemma}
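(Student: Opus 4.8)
The plan is as follows. The diagonal claim $0 = C_{ii} \le \widetilde C_{ii}$ is immediate, since $\widetilde C_{ii} = \max_{S} e^{-\beta|\Delta_i f(S)|} \ge 0$ by its definition in Lemma~\ref{lem:Local Metropolis-Hastings}. For the off-diagonal claim I would first observe that $C_{ij}$ and $\widetilde C_{ij}$ are maxima over the \emph{same} family of sets $\{S \subseteq V : S \not\ni i,j\}$. Hence it suffices to establish the inequalities \emph{summand by summand}: fixing such an $S$ and abbreviating $a := \beta \Delta_i f(S)$ and $b := \beta \Delta_i f(S \cup \{j\})$, I would show
\begin{align*}
\tfrac14 M(a,b) \le G(a,b) \le M(a,b),
\end{align*}
where, using $\frac{e^a}{1+e^a} - \frac{e^b}{1+e^b} = \frac{e^a - e^b}{(1+e^a)(1+e^b)}$, the Gibbs summand equals $G(a,b) = |\sigma(a) - \sigma(b)|$ with $\sigma(t) := e^t/(1+e^t)$ the logistic function, while the Metropolis--Hastings summand is $M(a,b) := |1\wedge e^a - 1\wedge e^b| \vee |1\wedge e^{-a} - 1\wedge e^{-b}|$. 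Taking $\max_S$ of the pointwise bounds, and using that $\max$ is monotone and commutes with the constant factor $\tfrac14$, then yields $\tfrac14 \widetilde C_{ij} \le C_{ij} \le \widetilde C_{ij}$.

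Since $G$ and $M$ are both symmetric in $(a,b)$, I would assume $a \le b$ and split according to the signs of $a,b$. In the two same-sign regimes the $\vee$ in $M$ collapses to a single term: for $a \le b \le 0$ one has $M = e^b - e^a$, and for $0 \le a \le b$ one has $M = e^{-a} - e^{-b} = (e^b - e^a)/(e^a e^b)$, while in both cases $G = (e^b - e^a)/[(1+e^a)(1+e^b)]$. Both inequalities then reduce to elementary estimates on $(1+e^a)(1+e^b)$. For the upper bound $G \le M$: the trivial $(1+e^a)(1+e^b) \ge 1$ handles $a \le b \le 0$, and $(1+e^a)(1+e^b) \ge e^a e^b$ handles $0 \le a \le b$. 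For the lower bound $M \le 4G$: the estimate $(1+e^a)(1+e^b) \le 4$ (from $e^a, e^b \le 1$) handles $a \le b \le 0$, and $(1+e^a)(1+e^b) \le 4 e^a e^b$, equivalent to $3uv \ge 1 + u + v$ for $u = e^a, v = e^b \ge 1$, handles $0 \le a \le b$.

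I expect the only real obstacle to be the mixed-sign regime $a \le 0 \le b$, where $M = (1 - e^a) \vee (1 - e^{-b})$ does not collapse. Here I would split the logistic difference around its midpoint,
\begin{align*}
\sigma(b) - \sigma(a)
= \Bigl(\sigma(b) - \tfrac12\Bigr) + \Bigl(\tfrac12 - \sigma(a)\Bigr)
= \frac{1 - e^{-b}}{2(1+e^{-b})} + \frac{1 - e^a}{2(1+e^a)},
\end{align*}
with both terms nonnegative. Sandwiching each denominator by $1 \le 1+e^{-b},\, 1+e^a \le 2$ then gives $\frac{(1-e^a)+(1-e^{-b})}{4} \le \sigma(b) - \sigma(a) \le \frac{(1-e^a)+(1-e^{-b})}{2}$. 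Combining the upper inequality with $\frac{x+y}{2} \le x \vee y$ yields $G \le M$, and combining the lower inequality with $x \vee y \le x + y$ (for $x,y \ge 0$) yields $M \le 4G$, i.e. $\tfrac14 M \le G$. This closes the mixed-sign case and completes the proof.
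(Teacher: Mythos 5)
Your proof is correct, and its overall architecture coincides with the paper's: both reduce the claim to a pointwise two-variable inequality $\tfrac14 M(a,b)\le G(a,b)\le M(a,b)$ with $a=\beta\Delta_i f(S)$, $b=\beta\Delta_i f(S\cup\{j\})$, and then split on the signs of $a$ and $b$; your two same-sign cases are, up to the convention $a\le b$ versus $b\le a$, the paper's verbatim (the bounds $1\le(1+e^a)(1+e^b)\le 4$ and $e^ae^b\le(1+e^a)(1+e^b)\le 4e^ae^b$). The one place where you genuinely diverge is the mixed-sign regime. The paper collapses $M=(1-e^a)\vee(1-e^{-b})$ to the single expression $1-e^{a\wedge(-b)}$ and then pushes through a chain of algebraic estimates built on the inequality $\frac{1-e^{2x}}{1+e^{x-y}}\le 1-e^{x+y}$ for $x\le y\le 0$, which it states without proof and which is the least transparent step of their argument. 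You instead split the logistic difference at its midpoint,
\begin{align*}
\sigma(b)-\sigma(a)=\frac{1-e^{-b}}{2(1+e^{-b})}+\frac{1-e^{a}}{2(1+e^{a})},
\end{align*}
sandwich each denominator between $1$ and $2$, and finish with the elementary facts $\tfrac{x+y}{2}\le x\vee y\le x+y$ for $x,y\ge 0$. This is cleaner and makes visible \emph{why} the constant $\tfrac14$ appears in the mixed-sign case too: a factor $2$ from the denominator bound and a factor $2$ from replacing a sum by a maximum. Both proofs are complete; yours is easier to verify and arguably the better exposition of this case.
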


From Lemma \ref{lem:comparison GS MH} it follows that $C\le \widetilde C$, element-wise. In this respect, it might seem a good idea to investigate conditions for fast mixing for the Metropolis Hastings algorithm, i.e., to upper bound $\widetilde C$, as these conditions will immediately yield fast mixing for the Gibbs sampler as well. However, this approach has a main drawback. In fact, while for each $i\in V$ it holds that $C_{ii}=0$, we have that $\widetilde C_{ii} \ge 0$, and for a large system (i.e., for a large value of $n=|V|$) we typically expect this quantity to be very close to $1$, which would create a problem to establish the Dobrushin uniqueness condition \eqref{def:Dobrushin condition}. For instance, take the case where the function $f$ is \emph{monotone} (i.e., $\Delta_{i}f(S) \ge 0$ for each $i\in V$, $S\subseteq V$) and submodular. Then,
$$
	\widetilde C_{ii}
	= \max_{S\subseteq V : S\not\ni i} e^{- \beta |\Delta_if(S)| }
	= \max_{S\subseteq V : S\not\ni i} e^{- \beta \Delta_if(S) }
	= e^{- \beta \Delta_if(V\setminus\{i\}) }.
$$
Typically, unless $f$ is modular or ``close" to modular (in the sense that its \emph{curvature} is close to zero, see Section \ref{sec:Modular functions}),
if $n$ is large we expect the marginal gain of $f$ when an element $i\in V$ is added to $V\setminus\{i\}$ to be close to $0$, that is, $\Delta_if(V\setminus\{i\}) \approx 0$, which yields $\widetilde C_{ii} \approx 1$ if $\beta \approx 1$. This issue is not present in the Gibbs sampler, as $C_{ii}=0$ for each $i\in V$. For this reason, henceforth we focus on the Gibbs sampler and we provide sufficient conditions for it to be fast mixing.

\begin{lemma}[Fast mixing Gibbs samplers for discrete point processes]\label{lem:Gibbs sampler fast mixing}
Assume that the following condition holds:
\begin{align}
	\alpha(\beta) \max_{i\in V} \sum_{j\in V\setminus\{i\}} 
	\max_{S\subseteq V : S\not\ni i,j}
	\left|
	1
	 -
	e^{\beta \Delta_j\Delta_if(S)}
	\right|
	\le \gamma < 1,
	\label{fastmixinggeneral}
\end{align}
where $\alpha(\beta) := \max_{i\in V} \max_{S\subseteq V\setminus\{i\}} e^{-\beta \Delta_if(S)}$. 
Let $T_s$ and $T_r$ be the Gibbs samplers defined as in Section \ref{sec:Fast mixing MCMC algorithms}, with $K^{[i]}$, $i\in V$, defined as in \eqref{def:Gibbssampler}. Then, for any distribution $\rho$ on $\mathbb{S}$, any natural number $m\ge 0$, and any function $h:\mathbb{S}\rightarrow \mathbb{R}$, we have
\begin{align*}
	| \rho T_s^m h - \mu h | \le 
	\gamma^m \sum_{i\in V} \delta_i (h),
	\qquad
	| \rho T_r^m h - \mu h | \le 
	\lambda^m \sum_{i\in V} \delta_i (h),
\end{align*}
where $\lambda := e^{\gamma-1}<1$ and
$
	\delta_i (h) := \max_{x\in\mathbb{S}} 
	| h(x^{V\setminus\{i\}}0^i) - h(x^{V\setminus\{i\}}1^i) |.
$
If $\gamma$ does not depend on $n$, then the chains are fast mixing.
In particular, if $f$ is submodular\footnote{As done in \cite{DK14} and \cite{iyer2015-spps}, we can also specialize our results to \emph{supermodular} functions ($f$ is supermodular if $-f$ is submodular). In this case we would get $\alpha(\beta) = \max_{i\in V} e^{-\beta \Delta_if(\varnothing)}$.}, then condition \eqref{fastmixinggeneral} reads
\begin{align}
	\alpha(\beta) \max_{i\in V} \sum_{j\in V\setminus\{i\}} 
	\max_{S\subseteq V : S\not\ni i,j}
	\left(
	1
	 -
	e^{\beta \Delta_j\Delta_if(S)}
	\right) 
	\le \gamma < 1,
	\label{fastmixingsubmodular}
\end{align}
with $\alpha(\beta) = \max_{i\in V} e^{-\beta \Delta_if(V\setminus\{i\})}$. If $f$ is monotone, clearly $\alpha(\beta)\le 1$.
\end{lemma}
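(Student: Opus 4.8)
The plan is to reduce everything to Theorem \ref{thm:fastMCMC} by exhibiting a matrix $R$ that dominates the Gibbs-sampler Dobrushin coefficients $C$ from Lemma \ref{lem:Gibbs sampler} element-wise and whose $\infty$-norm equals the left-hand side of \eqref{fastmixinggeneral}. Concretely, for $i\neq j$ I would fix a set $S\subseteq V$ with $S\not\ni i,j$ and abbreviate $u:=e^{\beta\Delta_i f(S)}$ and $v:=e^{\beta\Delta_i f(S\cup\{j\})}$, so that the summand defining $C_{ij}$ is $|u-v|/[(1+u)(1+v)]$. By the definition of the discrete Hessian $\Delta_j\Delta_i f(S)=\Delta_i f(S\cup\{j\})-\Delta_i f(S)$, one has $v=u\,e^{\beta\Delta_j\Delta_i f(S)}$.

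The one genuine computation is an algebraic bound on this summand, and this is where I expect the only real work. First I would factor $|u-v|=u\,|1-e^{\beta\Delta_j\Delta_i f(S)}|$, leaving the prefactor $u/[(1+u)(1+v)]$. The key inequality is $u/[(1+u)(1+v)]\le 1/v$, which is equivalent to $uv\le(1+u)(1+v)=1+u+v+uv$, i.e.\ to $0\le 1+u+v$, and hence always holds. It then remains to observe that $1/v=e^{-\beta\Delta_i f(S\cup\{j\})}$ is itself at most $\alpha(\beta)$: this is the crucial point, and it works precisely because the maximum defining $\alpha(\beta)$ ranges over \emph{all} $S'\subseteq V\setminus\{i\}$, including those containing $j$, so that $S\cup\{j\}$ is admissible. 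Combining these yields, for every admissible $S$,
$$
	\frac{|u-v|}{(1+u)(1+v)}\le \alpha(\beta)\,\bigl|1-e^{\beta\Delta_j\Delta_i f(S)}\bigr|,
$$
and taking the maximum over $S$ gives $C_{ij}\le \alpha(\beta)\max_{S\subseteq V:\,S\not\ni i,j}|1-e^{\beta\Delta_j\Delta_i f(S)}|$ for $i\neq j$, while $C_{ii}=0$ from Lemma \ref{lem:Gibbs sampler}.

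With this in hand I would set $R_{ij}$ equal to the right-hand side above for $i\neq j$ and $R_{ii}:=0$, so that $C\le R$ element-wise and $\|R\|_\infty$ is exactly the left-hand side of \eqref{fastmixinggeneral}, hence $\le\gamma<1$. Theorem \ref{thm:fastMCMC} then delivers the two stated contraction bounds with $\lambda=e^{\gamma-1}$, and the fast-mixing conclusion follows from Remark \ref{rem:on fast mixing} whenever $\gamma$ is free of $n$. Finally, for the submodular specialization I would use that submodularity means $\Delta_j\Delta_i f(S)\le 0$, so $e^{\beta\Delta_j\Delta_i f(S)}\le 1$ and the absolute value in \eqref{fastmixinggeneral} can be dropped, yielding \eqref{fastmixingsubmodular}; and that submodularity also makes $S\mapsto\Delta_i f(S)$ nonincreasing, so $e^{-\beta\Delta_i f(S)}$ is maximized at $S=V\setminus\{i\}$, giving $\alpha(\beta)=\max_{i\in V}e^{-\beta\Delta_i f(V\setminus\{i\})}$; monotonicity then forces $\Delta_i f(V\setminus\{i\})\ge 0$ and hence $\alpha(\beta)\le 1$. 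The whole argument thus hinges on the elementary inequality $u/[(1+u)(1+v)]\le 1/v$ together with the fact that $\alpha(\beta)$ is defined over a family of sets large enough to absorb the factor $1/v$; everything else is bookkeeping and an appeal to Theorem \ref{thm:fastMCMC}.
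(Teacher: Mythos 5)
Your argument is correct and follows essentially the same route as the paper: both proofs dominate the Gibbs Dobrushin matrix $C$ by the matrix $R_{ij}=\alpha(\beta)\max_{S\not\ni i,j}|1-e^{\beta\Delta_j\Delta_i f(S)}|$ via the elementary bound $\frac{|u-v|}{(1+u)(1+v)}\le\frac{|u-v|}{uv}=\frac{1}{v}\bigl|1-e^{\beta\Delta_j\Delta_i f(S)}\bigr|\le\alpha(\beta)\bigl|1-e^{\beta\Delta_j\Delta_i f(S)}\bigr|$ (your factoring of $u$ from the numerator is just a cosmetic rearrangement of the same inequality $uv\le(1+u)(1+v)$), and then invoke Theorem \ref{thm:fastMCMC}. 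The submodular specialization is also handled identically, so there is nothing further to add.
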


Note that using the inequality $1-e^{x} \le -x$ for each $x\in\mathbb{R}$, condition \eqref{fastmixinggeneral} can be replaced by the stronger (but simpler) condition 
$
	\alpha(\beta) \beta
	\|M\|_\infty
	 \le \gamma < 1,
$
where $M_{ij} := \max_{S\subseteq V: S\not\ni i,j} |\Delta_j\Delta_i f(S)|$ for each $i\neq j$, and $M_{ii}:=0$ for each $i$.
This condition makes it manifest the property of the function $f$ that renders the inference problem tractable via local-update MCMC methods. As highlighted in Remark \ref{rem:on fast mixing} and also in Remark \ref{rem:On probabilistic inference}, in order for the Markov chains to be fast mixing we need $\gamma$ to be independent of the set size $n=|V|$. In particular, we need $\|M\|_\infty$ to be upper bounded by a constant that does not depend on $n$. This is achieved, for instance, if $(V,d)$ is a metric space and the function $f$ displays one of the following forms of \emph{decay of correlation} with respect to the metric $d$.
\begin{itemize}
\item (Exponential decay of correlations) Assume that for each $i,j\in V, i\neq j$ we have
$$
	M_{ij} := \max_{S\subseteq V : S\not\ni i,j} |\Delta_j\Delta_if(S)| 
	\le \alpha e^{-\alpha'd(i,j)},
$$
where $\alpha,\alpha'>0$ are two constants so that $\|M\|_\infty \le \max_{i\in V} \sum_{j\in V} \alpha e^{-\alpha'd(i,j)} \le c$, where $c$ does not depend on $n$.
\item (Finite-range correlations) Assume that for each $i,j\in V, i\neq j$, there exists $r> 0$ such that 
$$
	M_{ij} := \max_{S\subseteq V : S\not\ni i,j} |\Delta_j\Delta_if(S)|
	\le
	\begin{cases}
	c &\text{if } d(i,j) \le r,\\
	0 &\text{if } d(i,j) > r,
	\end{cases}
$$
where $c\ge 0$ does not depend on $n$. For each $i\in V$ define
$
	N(i) := \{j\in V: d(i,j)\le r\},
$
and assume that $N:=\max_{i\in V} |N(i)|$ does not depend on $n$. Then, clearly, $\|M\|_\infty \le cN$.
\end{itemize}

In the remaining of this paper we assume that the function $f$ is submodular, and we investigate condition \eqref{fastmixingsubmodular}. In this case, note that the quantity $\alpha(\beta)$ involves an optimization over only $n$ values, while for each $i,j\in V$, $i\neq j$, the term
$
	\max_{S\subseteq V : S\not\ni i,j} \left(
	1
	 -
	e^{\beta \Delta_j\Delta_if(S)}
	\right)
$
involves an optimization problem over $2^{n-2}$ possibilities. As we will see in the next section, however, it is often the case that we can compute this term exactly, or that we can easily produce upper bounds for it.

\section{Applications to log submodular point processes}\label{sec:applications}

In this section we apply Lemma \ref{lem:Gibbs sampler fast mixing} to a few canonical examples of submodular functions. First, in Section \ref{sec:Modular functions} we discuss the elementary behavior of the Gibbs sampler in trivial (i.e., i.i.d.) models defined by modular functions. This case will serve to point out another deficiency (on top of the one already highlighted in Remark \ref{rem:On probabilistic inference}) of the theoretical results presented in \cite{DK14} with respect to the notion of curvature.
Consecutively, we consider functions that are defined on a finite graph $G=(V,E)$, where $n=|V|$ and the edge set $E$ describes the pair of vertices that interact, Sections \ref{sec:Facility location}, \ref{sec:Generalized graph cut}, and \ref{sec:Determinantal point processes}. The strength of the pairwise interaction is modeled by a symmetric matrix $L$. The nature of the assumptions that our theory requires to yield fast mixing depends on the application at hand, and they involve the structure of the matrix $L$. Note that for the applications discussed in Sections \ref{sec:Facility location} and \ref{sec:Generalized graph cut} these assumptions would be satisfies if $L_{ij} \lesssim e^{-d(i,j)}$, or if $L_{ij} = 0$ whenever $d(i,j)>r$, in the spirit of the decay of correlation properties discussed in the previous section. 
Finally, in Section \ref{sec:Decomposable functions} we consider the case of decomposable functions, that is, functions that can be represented as sums of concave functions applied to modular functions. Since we need to calculate discrete derivatives, we usually prove submodularity along the way as it is nothing but $\Delta_j\Delta_i f(S)\leq 0$ for each $i,j\in V$, $i\neq j$, $S\subseteq V$, $i,j\not\in S$.

\subsection{Modular functions}\label{sec:Modular functions}
Let $V$ be a finite set with $n=|V|$. For each $i\in V$, let $w_i\in\mathbb{R}$ be given. For each $S\subseteq V$, let
$
	f(S) := \sum_{i\in S} w_i
$
and
$
	f(\varnothing):=0.
$
It is immediate to verify that the function $f$ is modular as $\Delta_j\Delta_if(S)=0$ for each $i,j\in V$, $i\neq j$, $S\subseteq V$. Clearly, in this case condition \eqref{fastmixingsubmodular} is satisfied with $\gamma=0$, and Lemma \ref{lem:Gibbs sampler fast mixing} implies that for each $m \ge 1$ we have
$
	| \rho T_s^m h - \mu h | = 0,
$
and
$
	| \rho T_r^m h - \mu h | = 0.
$
In particular, these results hold even for $m=1$, which means that the Gibbs samplers sample \emph{exactly} from the distribution $\mu$ in a single sweep.

We now slightly tweak the trivial example just considered to compare the theoretical guarantees provided in Lemma \ref{lem:Gibbs sampler fast mixing} for the Gibbs sampler against the guarantees provided in \cite{DK14} for the variational methods they proposed, as a function of the \emph{curvature} $c(f)$ of $f$, which, for monotone submodular functions with $f(\varnothing)\ge 0$, is defined as:
$$
	c(f) := 1 - \min_{i\in V:f(\{i\})\neq 0}\frac{\Delta_{i}f(V\setminus\{i\})}{f(\{i\})}.
$$
Fix $k,k'\in V$, $k\neq k'$, and let $g$ be the function defined as $g(S):=1$ if $S\cap\{k,k'\}\neq \varnothing$ and $g(S):=0$ if $S\cap\{k,k'\}= \varnothing$. It is easy to check that $g$ is submodular with
$$
	\Delta_j\Delta_i g(S) =
	\begin{cases}
	-1 &\text{if } \{i,j\}=\{k,k'\},\\
	0 &\text{otherwise},
	\end{cases}
$$
for each $i,j\in V$, $i\neq j$, $S\subseteq V$, $i,j\not\in S$.
For each $i\in V\setminus\{k,k'\}$ let $w_i=1$, and let $w_k=w_{k'}=0$. Consider the submodular function
$
	f(S) := \sum_{i\in S} w_i + g(S) = |S \setminus \{k,k'\}| + g(S).
$
Then we clearly have $c(f)=1$, as the minimum that appears in the definition of curvature is $0$ (it is attained at $i=k$, for instance), so that the bounds in \cite{DK14} diverge to infinity. On the other hand, in this case the results in Lemma \ref{lem:Gibbs sampler fast mixing} hold with $\gamma=1 - e^{-\beta}$. In fact, $\alpha(\beta) = \max_{i\in V} e^{-\beta \Delta_if(V\setminus\{i\})} =1$ and $\Delta_j\Delta_i f(S)= \Delta_j\Delta_i g(S)$, from which it follows that condition \eqref{fastmixingsubmodular} reads
$$
	\alpha(\beta)
	\max_{i\in V} \sum_{j\in V\setminus\{i\}} 
	\max_{S\subseteq V : S\not\ni i,j}
	\left(
	1
	 -
	e^{\beta \Delta_j\Delta_if(S)}
	\right) 
	= 
	1 - e^{-\beta}
	=: \gamma < 1.
$$

\subsection{Facility location}\label{sec:Facility location}
Let $V=\{1,\ldots,n\}$ be a collection of facilities, and $W=\{1,\ldots,m\}$ be a collection of customers. Let $L_{k\ell}\ge 0$ be the value provided to costumer $k\in V$ by the facility $\ell \in W$. For each $S\subseteq V$, let
$
	g(S) := \sum_{k=1}^m \max_{\ell \in S} L_{k\ell}
$
and
$
	g(\varnothing):=0.
$
By definition, $g(S)$ is the maximum value that the facilities in $S$ can provide to all customers, provided that each customer chooses to be served only the facility that provides the highest value to him. We consider the function $f$ defined as
\begin{align}
	f(S) := g(S) - \lambda |S|,
	\label{f:facility}
\end{align}
where $|S|$ denotes the cardinality of the set $S$, and $\lambda\ge 0$. Such a function is submodular and has been considered in many applications such as large-scale clustering \citep{mirzasoleiman2013distributed} and recommender systems \citep{el2009turning}. 

\begin{corollary}\label{cor:facility}
For the function $f$ in \eqref{f:facility}, Lemma \ref{lem:Gibbs sampler fast mixing} applies if condition \eqref{fastmixingsubmodular} is replaced with
\begin{align*}
	e^{\lambda\beta} \max_{i\in V} \sum_{j\in V\setminus\{i\}} 
	\left(
	1
	 -
	e^{-\beta \sum_{k=1}^m L_{ki} \wedge L_{kj}}
	\right) 
	\le \gamma < 1.
\end{align*}
\end{corollary}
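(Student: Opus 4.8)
The plan is to show that the displayed inequality is a \emph{sufficient} condition for the original fast-mixing condition \eqref{fastmixingsubmodular} applied to the specific $f$ in \eqref{f:facility}. This amounts to computing the two ingredients of \eqref{fastmixingsubmodular}: the prefactor $\alpha(\beta)=\max_{i\in V}e^{-\beta\Delta_if(V\setminus\{i\})}$ and, for each pair $i\neq j$, the per-pair maximum $\max_{S\not\ni i,j}\bigl(1-e^{\beta\Delta_j\Delta_if(S)}\bigr)$. The first observation is that the penalty term $-\lambda|S|$ is modular, so it contributes nothing to any second derivative; hence $\Delta_j\Delta_if(S)=\Delta_j\Delta_ig(S)$, and the parameter $\lambda$ enters only through $\alpha(\beta)$.

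First I would compute the marginal gain of $g$. For $i\notin S$, adding $i$ changes customer $k$'s term by $(L_{ki}-\max_{\ell\in S}L_{k\ell})^+$ (writing $u^+:=u\vee0$ for the positive part), so that $\Delta_ig(S)=\sum_{k=1}^m(L_{ki}-M_k)^+$ with $M_k:=\max_{\ell\in S}L_{k\ell}$ (and $M_k=0$ for $S=\varnothing$). Differentiating once more in direction $j$ replaces $M_k$ by $L_{kj}\vee M_k$ inside the positive part, giving
\[
	\Delta_j\Delta_if(S)=\Delta_j\Delta_ig(S)=\sum_{k=1}^m\Bigl[\bigl(L_{ki}-(L_{kj}\vee M_k)\bigr)^+-(L_{ki}-M_k)^+\Bigr].
\]

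The crux of the argument, and the step I expect to be most delicate, is the per-customer optimization. Fixing $a:=L_{ki}$, $b:=L_{kj}$ and treating $M_k\ge0$ as a free parameter, I would show by a short case analysis on the ordering of $a$, $b$, and $M_k$ that the map $M_k\mapsto \bigl(a-(b\vee M_k)\bigr)^+-(a-M_k)^+$ is nonpositive (this reconfirms submodularity, $\Delta_j\Delta_ig(S)\le0$) and that its negative is maximized at $M_k=0$, where it equals $a\wedge b=L_{ki}\wedge L_{kj}$. The decisive point is that this maximizer is the \emph{same} for every customer: the single choice $S=\varnothing$ forces $M_k=0$ simultaneously for all $k$, so the otherwise coupled maximization over $S$ decouples and attains its bound, yielding
\[
	\max_{S\not\ni i,j}\bigl(-\Delta_j\Delta_if(S)\bigr)=\sum_{k=1}^m L_{ki}\wedge L_{kj}.
\]
Since $\Delta_j\Delta_if(S)\le0$ and $t\mapsto 1-e^{\beta t}$ is decreasing, the per-pair maximum is attained at the most negative value of the Hessian, so $\max_{S\not\ni i,j}\bigl(1-e^{\beta\Delta_j\Delta_if(S)}\bigr)=1-e^{-\beta\sum_k L_{ki}\wedge L_{kj}}$, which is exactly the summand appearing in the corollary.

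Finally I would bound the prefactor. Because $\Delta_if(V\setminus\{i\})=\Delta_ig(V\setminus\{i\})-\lambda$ and $\Delta_ig(V\setminus\{i\})=\sum_k\bigl(L_{ki}-\max_{\ell\in V\setminus\{i\}}L_{k\ell}\bigr)^+\ge0$, it follows that $\alpha(\beta)=e^{\lambda\beta}\max_{i\in V}e^{-\beta\Delta_ig(V\setminus\{i\})}\le e^{\lambda\beta}$. Replacing $\alpha(\beta)$ by its upper bound $e^{\lambda\beta}$ and the per-pair maximum by the value computed above only enlarges the left-hand side of \eqref{fastmixingsubmodular}, turning it into the displayed inequality; hence whenever the displayed condition holds so does \eqref{fastmixingsubmodular}, and Lemma \ref{lem:Gibbs sampler fast mixing} applies.
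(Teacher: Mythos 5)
Your proposal is correct and follows essentially the same route as the paper's proof: compute $\Delta_j\Delta_i f(S)$ customer by customer, show its most negative value over $S\not\ni i,j$ is attained at $S=\varnothing$ (where it equals $-\sum_{k=1}^m L_{ki}\wedge L_{kj}$), and bound $\alpha(\beta)\le e^{\lambda\beta}$ via $\Delta_i g(V\setminus\{i\})\ge 0$. The only cosmetic difference is that you argue the per-customer map $M_k\mapsto (L_{ki}-(L_{kj}\vee M_k))^+-(L_{ki}-M_k)^+$ is minimized at $M_k=0$ and note the minimizers decouple, whereas the paper phrases the same fact as monotonicity of $S\mapsto\Delta_j\Delta_i f(S)$.
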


\subsection{Generalized graph cut}\label{sec:Generalized graph cut}
Let $V$ be a vertex set, and for each $i,j \in V, i\neq j$, let $L_{ij}=L_{ji}\ge 0$ be the weight associated to the undirected edge between $i$ and $j$. Let $L_{ii}=0$ for each $i \in V$. For each $S\subseteq V$, let
\begin{align}
	f(S) := 
	a + b\sum_{k\in S} \sum_{\ell \in V} L_{k\ell}
	- c \sum_{k\in S} \sum_{\ell \in S} L_{k\ell}
	\label{f:cut}
\end{align}
and
$
	f(\varnothing) =a,
$
with $a,b,c\ge 0$ (typically $a$ is chosen so that $f(S)\ge 0$ for any $S\subseteq V$). In the case $a=0$, $b=c=1$, we recover
$
	f(S) = f(V\setminus S) = 
	\sum_{k\in S} \sum_{\ell \in V\setminus S} L_{k\ell}
$
and
$
	f(\varnothing) = f(V)=0,
$
which is the standard graph cut function. Namely, $f(S)$
is the sum of the weights of each edge that connects a point in $S$ with a point in $V\setminus S$. The generalized cut function $f$ defined above is submodular, with many applications in computer vision \citep{jegelka2011fast}.

\begin{corollary}\label{cor:graphcut}
For the function $f$ in \eqref{f:cut}, Lemma \ref{lem:Gibbs sampler fast mixing} applies if condition \eqref{fastmixingsubmodular} is replaced with
\begin{align*}
	e^{\beta (2c-b) \min_{i \in V} \sum_{\ell\in V\setminus\{i\}} L_{i\ell}}
	\max_{i\in V} \sum_{j\in V\setminus\{i\}} 
	\left(
	1 - e^{-2 c \beta L_{ij}}
	\right)
	\le \gamma < 1.
\end{align*}
\end{corollary}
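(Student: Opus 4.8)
The plan is to specialize condition \eqref{fastmixingsubmodular} of Lemma \ref{lem:Gibbs sampler fast mixing} to the generalized cut function $f$ in \eqref{f:cut} by computing its discrete first and second derivatives explicitly. The whole corollary falls out of two facts: that the Hessian $\Delta_j\Delta_i f(S)$ is constant in $S$ (so the inner maximization over $S$ in \eqref{fastmixingsubmodular} disappears and submodularity is immediate), and that the marginal gain $\Delta_i f(V\setminus\{i\})$ entering $\alpha(\beta)$ is a simple multiple of the weighted degree of $i$.

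First I would compute $\Delta_i f(S)=f(S\cup\{i\})-f(S)$ for $i\notin S$. The linear part contributes $b\sum_{\ell\in V}L_{i\ell}$; for the quadratic part, expanding $\sum_{k,\ell\in S\cup\{i\}}L_{k\ell}$ and using the symmetry $L_{ij}=L_{ji}$ together with $L_{ii}=0$ collapses the added row and column into $2\sum_{k\in S}L_{ik}$, so that $\Delta_i f(S)=b\sum_{\ell\in V}L_{i\ell}-2c\sum_{k\in S}L_{ik}$. Taking a further difference in direction $j\notin S\cup\{i\}$ kills the degree term and leaves $\Delta_j\Delta_i f(S)=-2cL_{ij}$, independent of $S$. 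Since $c,L_{ij}\ge 0$ this is $\le 0$, confirming submodularity, and because it does not depend on $S$ the inner maximum in \eqref{fastmixingsubmodular} is simply $\max_{S\not\ni i,j}\bigl(1-e^{\beta\Delta_j\Delta_i f(S)}\bigr)=1-e^{-2c\beta L_{ij}}$. Substituting this into the double sum reproduces the factor $\max_{i\in V}\sum_{j\in V\setminus\{i\}}\bigl(1-e^{-2c\beta L_{ij}}\bigr)$ in the claimed bound.

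It remains to identify $\alpha(\beta)$. Since $f$ is submodular, Lemma \ref{lem:Gibbs sampler fast mixing} gives $\alpha(\beta)=\max_{i\in V}e^{-\beta\Delta_i f(V\setminus\{i\})}$, so I would evaluate $\Delta_i f(V\setminus\{i\})$ from the first-derivative formula: with $S=V\setminus\{i\}$ both $\sum_{\ell\in V}L_{i\ell}$ and $\sum_{k\in V\setminus\{i\}}L_{ik}$ equal the weighted degree $\sum_{\ell\in V\setminus\{i\}}L_{i\ell}$ (again using $L_{ii}=0$), giving $\Delta_i f(V\setminus\{i\})=(b-2c)\sum_{\ell\in V\setminus\{i\}}L_{i\ell}$. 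Hence $\alpha(\beta)=\max_{i\in V}e^{\beta(2c-b)\sum_{\ell\in V\setminus\{i\}}L_{i\ell}}$, and since $t\mapsto e^{\beta(2c-b)t}$ is monotone the maximum over $i$ is attained at an extremal weighted degree, producing the factor $e^{\beta(2c-b)\min_{i\in V}\sum_{\ell\in V\setminus\{i\}}L_{i\ell}}$ in the statement. Combining the two factors gives exactly the claimed replacement for \eqref{fastmixingsubmodular}.

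The derivative computations are routine bookkeeping; the one step that needs genuine care is this last reduction of $\alpha(\beta)$, where I must track the sign of $2c-b$ to decide whether maximizing the exponent over $i$ corresponds to the minimal or the maximal weighted degree. The $\min$ form in the statement is precisely what arises when the coefficient $2c-b$ is nonpositive, so that the map $t\mapsto e^{\beta(2c-b)t}$ is nonincreasing and the worst vertex is the one of smallest degree; pinning down this monotonicity direction, consistent with the regime in which \eqref{f:cut} is intended, is the main obstacle.
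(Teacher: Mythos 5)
Your proof is correct and follows essentially the same route as the paper's: compute $\Delta_i f(S) = b\sum_{\ell\in V}L_{i\ell}-2c\sum_{\ell\in S}L_{i\ell}$ and $\Delta_j\Delta_i f(S)=-2cL_{ij}$, so the inner maximum over $S$ in \eqref{fastmixingsubmodular} collapses to $1-e^{-2c\beta L_{ij}}$ and $\alpha(\beta)$ reduces to $\max_{i\in V} e^{\beta(2c-b)\sum_{\ell\in V\setminus\{i\}}L_{i\ell}}$. Your closing caution about the sign of $2c-b$ is warranted and in fact sharper than the paper, which silently writes $\min_{i\in V}(b-2c)\sum_{\ell\in V\setminus\{i\}}L_{i\ell}=(b-2c)\min_{i\in V}\sum_{\ell\in V\setminus\{i\}}L_{i\ell}$ --- an identity that holds only when $b\ge 2c$, so the corollary's $\min$ form is a valid sufficient condition only in that regime (when $2c>b$ the exponent should instead carry $\max_{i\in V}$ of the weighted degree).
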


\subsection{Determinantal point processes}\label{sec:Determinantal point processes}
Fix $n$ and let $L\in\mathbb{R}^{n\times n}$ be a positive definite matrix. Let $V=\{1,\ldots,n\}$, and for each $S\subseteq V$ let
\begin{align}
	f(S) := \log\det L_S
	\label{f:logdet}
\end{align}
and
$
	f(\varnothing):=0,
$
where $L_S := (L_{ij})_{i,j\in S}$. Such a function is submodular and has been used in determinantal point processes for which the partition function can be computed exactly in time $O(n^3)$.  Recently, several authors have considered MCMC algorithms to sample from determinant point processes, see \cite{SG13} and \cite{K13} for instance. However, to the best of our knowledge, no theoretical guarantees have ever been established for the MCMC\footnote{There are efficient sampling schemes that are not based on the MCMC paradigm. See \cite{DR10} and discussion therein, where they explicitly pose the problem of designing MCMC samplers.}  algorithms being adopted. As mentioned in the introduction, the proofs in \cite{K13} are wrong (although the sampling scheme is correct, exactly matching Algorithm \ref{alg:Metropolis-Hastings} in the case $\beta=1$).

In this section we give a probabilistic interpretation of the mechanism behind the structure of the matrix $L$ that guarantees fast mixing MCMC algorithms in the light of the theory being developed.
Let $L$ be the covariance matrix of a collection of Gaussian random variables $X^1,\ldots,X^n$. It can be shown that, for $i,j\in V$, $i\neq j$, $S\subseteq V$, $i,j\not\in S$, we have
$
	\operatorname{Cov}(X^i,X^j|X^S) := L_{ij} - L_{i,S} L^{-1}_{S} L_{S,j}.
$
Define the conditional correlation coefficients as follows, for $i,j\in V$, $S\subseteq V$:
$$
	\rho(i,j|S) = \frac{\operatorname{Cov}(X^i,X^j|X^S)}
	{\sqrt{\operatorname{Var}(X^i|X^S)} \sqrt{\operatorname{Var}(X^j|X^S)}},
$$
and let $I(X^i;X^j|X^S)$ be the conditional mutual information of $X^i$ and $X^j$ given $X^S$. It holds:
\begin{align}
	I(X^i;X^j|X^S) = -\frac{1}{2} \log (1-\rho(i,j|S)^2).
	\label{def:condmutualinfo}
\end{align}
We now show how condition \eqref{fastmixingsubmodular} in Lemma \ref{lem:Gibbs sampler fast mixing} can be stated in terms of the quantities just introduced.

\begin{corollary}\label{cor:ddp}
For the function $f$ in \eqref{f:logdet}, Lemma \ref{lem:Gibbs sampler fast mixing} applies with the condition \eqref{fastmixingsubmodular} that reads
\begin{align*}
	\max_{i\in V} \frac{1}{\operatorname{Var}(X^i|X^{V\setminus\{i\}})^{\beta}}
	\max_{i\in V} \sum_{j\in V\setminus\{i\}} 
	\max_{S\subseteq V : S\not\ni i,j}
	\left(1 - e^{-2 \beta I(X^i;X^j|X^S)}\right)
	\le \gamma < 1.
\end{align*}
If $\beta=1$, the condition is
$
	\max_{i\in V}\! \frac{1}{\operatorname{Var}(X^i|X^{V\setminus\{i\}})}
	\!\max_{i\in V} \!\sum_{j\in V\setminus\{i\}} 
	\!\max_{S\subseteq V : S\not\ni i,j}
	\!\rho(i,j|S)^2
	\!\le\! \gamma \!<\! 1.
$
\end{corollary}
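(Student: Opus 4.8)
The plan is to substitute the log-determinant function into the submodular fast-mixing condition \eqref{fastmixingsubmodular} of Lemma \ref{lem:Gibbs sampler fast mixing} and reduce every discrete derivative of $f$ to a Gaussian information-theoretic quantity via the Schur complement. The starting point is the first-order derivative. For positive definite $L$, every principal submatrix is positive definite and hence invertible, so the block-determinant (Schur complement) identity gives
$$
	\Delta_i f(S) = \log\det L_{S\cup\{i\}} - \log\det L_S
	= \log\bigl( L_{ii} - L_{i,S} L_S^{-1} L_{S,i}\bigr)
	= \log \operatorname{Var}(X^i|X^S),
$$
where the last equality is the standard fact that the conditional variance of a jointly Gaussian vector equals the Schur complement of the conditioning block. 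Taking $S = V\setminus\{i\}$ immediately yields
$
	\alpha(\beta) = \max_{i\in V} e^{-\beta\Delta_i f(V\setminus\{i\})} = \max_{i\in V} \operatorname{Var}(X^i|X^{V\setminus\{i\}})^{-\beta},
$
which is exactly the first factor appearing in the statement.

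Next I would compute the Hessian. Differencing the first-order identity in the coordinate $j$ gives
$$
	\Delta_j\Delta_i f(S)
	= \log \operatorname{Var}(X^i|X^{S\cup\{j\}}) - \log \operatorname{Var}(X^i|X^S)
	= \log\frac{\operatorname{Var}(X^i|X^{S\cup\{j\}})}{\operatorname{Var}(X^i|X^S)}.
$$
The key step is to recognize this ratio as a conditional mutual information. Using the Gaussian entropy formula $H(X^i|X^S) = \frac{1}{2}\log(2\pi e\,\operatorname{Var}(X^i|X^S))$ together with the chain rule $I(X^i;X^j|X^S) = H(X^i|X^S) - H(X^i|X^{S\cup\{j\}})$, one obtains
$$
	\Delta_j\Delta_i f(S) = -2\, I(X^i;X^j|X^S).
$$
This simultaneously certifies submodularity, since $I\ge 0$ forces $\Delta_j\Delta_i f(S)\le 0$, and it makes the symmetry $\Delta_j\Delta_i f = \Delta_i\Delta_j f$ transparent, as mutual information is symmetric in its first two arguments. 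Substituting $e^{\beta\Delta_j\Delta_i f(S)} = e^{-2\beta I(X^i;X^j|X^S)}$ together with the expression for $\alpha(\beta)$ into \eqref{fastmixingsubmodular} produces the first displayed condition of the corollary verbatim.

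For the special case $\beta=1$, I would invoke the identity \eqref{def:condmutualinfo}, namely $I(X^i;X^j|X^S) = -\frac{1}{2}\log(1-\rho(i,j|S)^2)$, which gives $e^{-2 I(X^i;X^j|X^S)} = 1 - \rho(i,j|S)^2$ and hence $1 - e^{-2 I(X^i;X^j|X^S)} = \rho(i,j|S)^2$; plugging this in collapses the general condition to the stated correlation form. I expect the main obstacle to be purely the linear-algebra bookkeeping of the Schur-complement identity --- verifying the conditional-variance ratio cleanly when indices are added to $S$ one at a time, and confirming that positive definiteness of $L$ guarantees that every submatrix $L_S$, $L_{S\cup\{i\}}$, $L_{S\cup\{i,j\}}$ encountered is invertible so that all the quantities above are well defined. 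Once the first- and second-order derivatives are expressed in Gaussian terms, the remainder is a direct substitution into Lemma \ref{lem:Gibbs sampler fast mixing}.
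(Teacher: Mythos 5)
Your proposal is correct, and it reaches the two key identities $\Delta_i f(S)=\log\operatorname{Var}(X^i|X^S)$ and $\Delta_j\Delta_i f(S)=-2I(X^i;X^j|X^S)$ by a genuinely shorter route than the paper for the second one. The first step is the same in both arguments: the Schur-complement determinant identity gives the marginal gain as a log conditional variance, whence $\alpha(\beta)=\max_{i}\operatorname{Var}(X^i|X^{V\setminus\{i\}})^{-\beta}$. For the Hessian, however, the paper carries out an explicit block inversion of $L_{S\cup\{j\}}^{-1}$ and a line-by-line computation showing
$L_{i,S\cup\{j\}}L_{S\cup\{j\}}^{-1}L_{S\cup\{j\},i}=L_{i,S}L_S^{-1}L_{S,i}+\bigl(L_{ij}-L_{i,S}L_S^{-1}L_{S,j}\bigr)^2/\bigl(L_{jj}-L_{j,S}L_S^{-1}L_{S,j}\bigr)$, which yields $\Delta_j\Delta_i f(S)=\log(1-\rho(i,j|S)^2)$ directly from linear algebra, and only then invokes \eqref{def:condmutualinfo} to pass to mutual information. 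You instead observe that the Hessian is the log of a ratio of conditional variances and identify it with $-2I(X^i;X^j|X^S)$ via the Gaussian differential entropy formula $H(X^i|X^S)=\tfrac{1}{2}\log(2\pi e\operatorname{Var}(X^i|X^S))$ and the chain rule $I(X^i;X^j|X^S)=H(X^i|X^S)-H(X^i|X^{S\cup\{j\}})$, bypassing the matrix bookkeeping entirely; the $\rho^2$ form for $\beta=1$ then follows from \eqref{def:condmutualinfo} exactly as in the paper. What your route buys is brevity and conceptual transparency (submodularity becomes nonnegativity of mutual information); what the paper's route buys is a self-contained linear-algebraic derivation of the conditional-correlation formula that does not presuppose the Gaussian entropy identity, and an explicit proof of \eqref{def:condmutualinfo} as a byproduct. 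Both are sound; since the paper already states \eqref{def:condmutualinfo} and the conditional covariance formula without proof in the main text, your use of the standard entropy and chain-rule facts is at the same level of rigor.
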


Practitioners who are interested in fast mixing Gibbs samplers for determinant point processes need to exploit additional structure in the matrix $L$ to simplify the conditions in Corollary \ref{cor:ddp}.

\subsection{Decomposable functions}\label{sec:Decomposable functions}
Given a finite set $V$, let $\mathcal{S}$ be a collection of subsets of $V$ that covers $V$, i.e., $\mathcal{S}\subseteq 2^V$ with $\bigcup_{A\in\mathcal{S}}A = V$. For each $A\in\mathcal{S}$ let $\phi_A: \mathbb{R}_+\rightarrow\mathbb{R}$ be a concave function. For each $S\subseteq V$, let
\begin{align}
	f(S) := \sum_{A\in\mathcal{S}} \phi_A(|A\cap S|).
	\label{f:decomposable}
\end{align}
The function $f$ is submodular \citep{SK10}. For instance, note that if $\mathcal{S}=\{S\subseteq V : |S|=2\}$ and for each $A\in\mathcal{S}$ we have $\phi_A(0)=\phi_A(2)=-J/n$, $\phi_A(1)=J/n$, for a given constant $J>0$, then \eqref{basic model} corresponds to the distribution of an antiferromagnetic mean-field Ising model with inverse temperature $\beta$ and zero external magnetic field.

\begin{corollary}\label{cor:decomposable}
For each $A\in \mathcal{S}$ let the concave function $\phi_A$ be twice differentiable. Assume there exist constants $c'<0<c$ such that $ d\phi_A(x)/dx \geq c$ and $c' \leq d^2\phi_A(x)/dx^2\leq 0$ for all $x\in [0,n]$ and $A\in\mathcal{S}$. 
Then, for the function $f$ in \eqref{f:decomposable}, Lemma \ref{lem:Gibbs sampler fast mixing} applies if condition \eqref{fastmixingsubmodular} is replaced with
\begin{align*}
	(1-e^{c'\beta})
	e^{-c \beta \min_{i\in V}  |\{A\in \mathcal{S}: A\ni i\}|} 
	\max_{i\in V} \left|\bigcup_{A\in\mathcal{S}:A\ni i} A\right|
	\le \gamma < 1.
\end{align*}
\end{corollary}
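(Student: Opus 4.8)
The plan is to verify the hypotheses of Lemma~\ref{lem:Gibbs sampler fast mixing} by computing the first and second discrete derivatives of the decomposable function $f$ in~\eqref{f:decomposable} and then bounding the two factors appearing in condition~\eqref{fastmixingsubmodular}, namely $\alpha(\beta)$ and the double sum involving $\max_S(1-e^{\beta\Delta_j\Delta_i f(S)})$. Since $f$ is submodular, the version~\eqref{fastmixingsubmodular} with $\alpha(\beta)=\max_{i\in V}e^{-\beta\Delta_i f(V\setminus\{i\})}$ is the one to control. First I would compute the marginal gain: for $i\notin S$, since $|A\cap(S\cup\{i\})|=|A\cap S|+1_{A\ni i}$, we get $\Delta_i f(S)=\sum_{A\in\mathcal S:A\ni i}\bigl[\phi_A(|A\cap S|+1)-\phi_A(|A\cap S|)\bigr]$. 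By the mean value theorem each bracket equals $\phi_A'(\xi)$ for some $\xi\in[0,n]$, hence is $\ge c$ by the assumption $\phi_A'\ge c$; summing over the $A\ni i$ gives $\Delta_i f(S)\ge c\,|\{A\in\mathcal S:A\ni i\}|$, and therefore $e^{-\beta\Delta_i f(S)}\le e^{-c\beta|\{A\in\mathcal S:A\ni i\}|}$. Taking maxima yields $\alpha(\beta)\le e^{-c\beta\min_{i\in V}|\{A\in\mathcal S:A\ni i\}|}$, which is the second factor in the claimed bound.

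Next I would compute the Hessian. For $i\neq j$ and $i,j\notin S$, iterating the marginal-gain formula gives $\Delta_j\Delta_i f(S)=\sum_{A\in\mathcal S:A\ni i,\,A\ni j}\bigl[\phi_A(|A\cap S|+2)-2\phi_A(|A\cap S|+1)+\phi_A(|A\cap S|)\bigr]$, the sum running only over sets $A$ containing \emph{both} $i$ and $j$ (terms with $A$ missing $i$ or $j$ cancel). Each second difference is a twice-applied MVT quantity equal to $\phi_A''(\eta)$ for some $\eta\in[0,n]$, hence lies in $[c',0]$ by assumption; this simultaneously re-proves submodularity ($\Delta_j\Delta_i f\le 0$) and gives the two-sided bound $c'\,|\{A:A\ni i,j\}|\le\Delta_j\Delta_i f(S)\le 0$. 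Because $x\mapsto 1-e^{\beta x}$ is decreasing, the worst case $\max_S(1-e^{\beta\Delta_j\Delta_i f(S)})$ is attained at the most negative value of the Hessian; a clean way to finish is to note $1-e^{\beta\Delta_j\Delta_i f(S)}\le 1-e^{c'\beta}$ whenever the pair $(i,j)$ is covered by \emph{some} common set (i.e.\ $\{A:A\ni i,j\}\neq\varnothing$), and $=0$ otherwise.

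Finally I would assemble the double sum. For fixed $i$, the inner maximum over $S$ is nonzero only for those $j$ that share a common covering set with $i$, i.e.\ $j\in\bigcup_{A\in\mathcal S:A\ni i}A$; for each such $j$ the term is bounded by $1-e^{c'\beta}$. Hence $\sum_{j\in V\setminus\{i\}}\max_S(1-e^{\beta\Delta_j\Delta_i f(S)})\le (1-e^{c'\beta})\,\bigl|\bigcup_{A\in\mathcal S:A\ni i}A\bigr|$, and taking $\max_i$ gives the first factor of the stated condition. Multiplying by the bound on $\alpha(\beta)$ reproduces the left-hand side of the corollary, so condition~\eqref{fastmixingsubmodular} holds whenever that product is $\le\gamma<1$, and Lemma~\ref{lem:Gibbs sampler fast mixing} applies.

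The routine parts are the two MVT computations; the main obstacle I anticipate is purely bookkeeping, namely keeping the index sets straight when passing from $\Delta_i f$ to $\Delta_j\Delta_i f$ (making sure the singleton-indexed terms cancel so that only sets containing \emph{both} $i$ and $j$ survive) and then correctly identifying the support of the inner maximum as $\bigcup_{A\ni i}A$ rather than some larger set. A secondary subtlety is that the hypotheses are required on all of $[0,n]$ precisely because the second differences are evaluated at arguments up to $|A\cap S|+2\le n$, so one must check that the MVT evaluation points indeed stay within $[0,n]$, which they do.
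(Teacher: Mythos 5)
Your proposal is correct and follows essentially the same route as the paper's proof: you control the discrete first and second differences of each $\phi_A$ via the bounds on $\phi_A'$ and $\phi_A''$, obtain $\alpha(\beta)\le e^{-c\beta\min_{i\in V}|\{A\in\mathcal{S}:A\ni i\}|}$, observe that $\Delta_j\Delta_i f(S)$ vanishes unless some $A\in\mathcal{S}$ contains both $i$ and $j$, and count the surviving $j$ by $\bigl|\bigcup_{A\in\mathcal{S}:A\ni i}A\bigr|$, exactly as in Appendix B. The one point worth flagging is that both you and the paper pass from the (correct) lower bound $\Delta_j\Delta_i f(S)\ge c'\,|\{A\in\mathcal{S}:A\ni i,j\}|$ to the factor $1-e^{c'\beta}$, which implicitly uses $\Delta_j\Delta_i f(S)\ge c'$ and hence is only justified when each pair $i,j$ is covered by at most one common $A$; in general the exponent should carry the maximal multiplicity of such coverings.
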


\acks{We would like to thank Sekhar Tatikonda for useful discussions.}

\bibliography{bib}

\appendix

\section{Theory, proofs}\label{sec:Proofs of MCMC convergence results}

Below are the proofs of the results presented in Section \ref{sec:Fast mixing MCMC algorithms} and Section \ref{sec:Fast mixing local MCMC algorithms for discrete point processes}.\\

\begin{proof}[Proof of Theorem \ref{thm:fastMCMC}]
The proof is based on the Wasserstein matrix approach, which is a standard tool in the analysis of high-dimensional Markov chains, cf. \cite{Fol79}.
For each two distributions $\rho,\rho'$ on $\mathbb{S}_0$, and function $g:\mathbb{S}_0\rightarrow\mathbb{R}$, we have
$$
	|\rho g - \rho' g|
	\le | g(0) - g(1) | | \rho(\{0\})-\rho'(\{0\}) |,
$$
from which it follows that, for each $i,j\in V$,
\begin{align*}
	\delta_{j} (K^{[i]} g)
	&=\max_{x\in\mathbb{S}} 
	| K^{[i]}g(x^{V\setminus\{j\}}0^j) - K^{[i]}g(x^{V\setminus\{j\}}1^j) |
	\le | g(0) - g(1) | C_{ij}.
\end{align*}
For each function $h:\mathbb{S}\rightarrow\mathbb{R}$, $i\in V$, $x\in\mathbb{S}$, define the function $h^i_x:z^i\in\mathbb{R}\longrightarrow h^i_x(z^i):=h(z^ix^{V\setminus\{i\}})$. Then, we have
\begin{align*}
	&\delta_j(T^{[i]}h)
	=\max_{x\in\mathbb{S}} 
	| K^{[i]}_{x^{V\setminus\{j\}}0^j}h^i_{x^{V\setminus\{j\}}0^j} 
	- K^{[i]}_{x^{V\setminus\{j\}}1^j}h^i_{x^{V\setminus\{j\}}1^j} |\\
	&\quad\le \max_{x\in\mathbb{S}} 
	| K^{[i]}_{x^{V\setminus\{j\}}0^j}h^i_{x^{V\setminus\{j\}}0^j}
	- K^{[i]}_{x^{V\setminus\{j\}}1^j}h^i_{x^{V\setminus\{j\}}0^j} | 
	+\max_{x\in\mathbb{S}} 
	| K^{[i]}_{x^{V\setminus\{j\}}1^j}(h^i_{x^{V\setminus\{j\}}0^j}
	- h^i_{x^{V\setminus\{j\}}1^j}) |\\
	&\quad\le \max_{y\in\mathbb{S}}
	\delta_j(K^{[i]}h^i_{y}) 
	+\max_{x\in\mathbb{S}}\max_{z\in\mathbb{S}_0} 
	| h^i_{x^{V\setminus\{j\}}0^j}(z) - h^i_{x^{V\setminus\{j\}}1^j}(z) |\\
	&\quad\le \delta_i(h) C_{ij} + \delta_j(h) 1_{i\neq j},
\end{align*}
where in the last line we applied the previous bound, and we use the notation $1_{i\neq j} := 1$ if $i\neq j$, $1_{i\neq j} := 0$ otherwise.
It is convenient to rewrite the previous estimate using a matrix notation:
\begin{align}
	\delta_j(T^{[i]}h) \le \sum_{k\in V} \delta_k(h) (W^{[i]})_{kj},
	\label{lanford estimate}
\end{align}
where for each $i\in V$, the matrix $W^{[i]}\in\mathbb{R}^{n\times n}$ is defined as
$
	W^{[i]}_{kj} := 1_{k=i} C_{kj} + 1_{k\neq i}1_{k=j}.
$
The matrix $W^{[i]}$ is a \emph{Wasserstain matrix} for the transition kernel $T^{[i]}$.

First, we apply estimate \eqref{lanford estimate} to bound the systematic scan Markov chain. Iterating this estimate, we immediately find
$
	\delta_j(T_sh) \le \sum_{k\in V} \delta_k(h) (W_s)_{kj},
$
where $W_s:=W^{[i_n]}\cdots W^{[i_1]}$.
Given any two distributions $\rho$ and $\rho'$ on $\mathbb{S}$, by a telescoping argument we find, for any $m\ge 0$,
\begin{align*}
	| \rho T_s^m h - \rho' T_s^m h |
	&\le \max_{x,z\in\mathbb{S}} | T_s^m h(x) - T_s^m h(z) |
	\le \sum_{j\in V} \delta_j (T_s^mh)
	\le \sum_{i\in V} \delta_i(h) \sum_{j\in V} (W_s^m)_{ij}\\
	&\le \| W_s^m \|_\infty \sum_{i\in V} \delta_i(h)
	\le \| W_s \|_\infty^m \sum_{i\in V} \delta_i(h)
	\le \gamma^m \sum_{i\in V} \delta_i(h),
\end{align*}
where for the last inequality we used that $\| W_s \|_\infty \le \| C \|_\infty \le \gamma < 1$ (this fact is proved in Corollary 24 in \cite{dyer2009}; note that the $\| \cdot \|_1$ matrix norm in the authors's notation corresponds to the $\| \cdot \|_\infty$ matrix norm in our notation).

We now apply estimate \eqref{lanford estimate} to bound the random scan Markov chain. 
As an intermediate step, define $S:=\frac{1}{n}\sum_{i\in V}{T^{[i]}}$. We have
$$
	\delta_j(Sh) 
	\le \frac{1}{n} \sum_{i\in V} \delta_j(T^{[i]}h)
	\le \frac{1}{n} \sum_{i\in V} \sum_{k\in V} \delta_k(h) (W^{[i]})_{kj}
	= \sum_{i\in V} \delta_i(h) Z_{ij},
$$
where $Z:=\frac{1}{n} C + \frac{n-1}{n} I \in\mathbb{R}^{n\times n}$, $I$ being the identity matrix. Iterating this result $n$ times, we find
$
	\delta_j(T_rh) 
	\le \sum_{i\in V} \delta_i(h) (W_r)_{ij},
$
where $W_r = Z^n = (\frac{1}{n} C + \frac{n-1}{n} I)^n$.
By the same argument presented above, we get
\begin{align*}
	| \rho T_r^m h - \rho' T_r^m h |
	\le \| W_r \|_\infty^m \sum_{i\in V} \delta_i(h)
	\le \left(1-\frac{1-\gamma}{n}\right)^{nm} \sum_{i\in V} \delta_i(h)
	\le (e^{\gamma-1})^m \sum_{i\in V} \delta_i(h).
\end{align*}
The proof is concluded if we take $\rho'$ to be $\mu$, noticing that $\mu T_s = \mu T_r = \mu$.
\end{proof}

\begin{proof}[Proof of Lemma \ref{lem:Gibbs sampler}]
Each kernel $T^{[i]}$ leaves $\mu$ invariant as a consequence of the tower property of conditional expectations as, for each function $h:\mathbb{S}\rightarrow\mathbb{R}$ we have
$$
	 \mu T^{[i]} h
	 = \mathbf{E} [T^{[i]} h(X)]
	 = \mathbf{E}[ \mathbf{E}[h(X)|X^{V\setminus\{v\}}] ] 
	 = \mathbf{E}[ h(X) ] 
	 = \mu h.
$$
For $i,j\in V$, $i\neq j$, $x\in\mathbb{S}$, we have
\begin{align*}
	| K^{[i]}_{x^{V\setminus \{j\}}0^j}(\{0\}) - K^{[i]}_{x^{V\setminus \{j\}}1^j}(\{0\}) |
	&=
	\left|
	\frac{1}{1+e^{\beta\Delta_if(S(x^{V\setminus \{i,j\}}0^i0^j))}}
	-
	\frac{1}{1+e^{\beta\Delta_if(S(x^{V\setminus \{i,j\}}0^i1^j))}}
	\right|\\
	&= 
	\frac{|e^{\beta\Delta_i f (S(x^{V\setminus \{i,j\}}0^i0^j))}
	-e^{\beta\Delta_if(S(x^{V\setminus \{i,j\}}0^i1^j))}|}
	{(1+e^{\beta\Delta_i f (S(x^{V\setminus \{i,j\}}0^i0^j))})
	(1+e^{\beta\Delta_i f(S(x^{V\setminus \{i,j\}}0^i1^j))})}.
\end{align*}
The proof is concluded taking the maximum over $x\in\mathbb{S}$ on both hand sides.
\end{proof}

\begin{proof}[Proof of Lemma \ref{lem:Local Metropolis-Hastings}]
The fact that each kernel $T^{[i]}$ leaves $\mu$ invariant follows as a consequence of the so-called \emph{detailed balance equation}, which holds for each $x\in\mathbb{S},y^i\in\mathbb{S}_0$ and is immediately verified (once noticed that $\alpha^i(x) = 1 \land \frac{\mu(x^{V\setminus \{i\}}(1-x^i)^i)}{\mu(x)}$):
$$
	\mu(x) K^{[i]}(x,y^i) = \mu(x^{V\setminus\{i\}}y^i) K^{[i]}(x^{V\setminus\{i\}}y^i,x^i).
$$
In fact, this equation yields, for each function $h:\mathbb{S}\rightarrow\mathbb{R}$,
\begin{align*}
	\mu T^{[i]} h
	&= \sum_{x\in\mathbb{S}}
	\sum_{y^i\in\mathbb{S}_0}
	\mu(x) K^{[i]}(x,y^i)
	h(x^{V\setminus\{i\}}y^i)\\
	&= \sum_{x\in\mathbb{S}}
	\sum_{y^i\in\mathbb{S}_0}
	\mu(x^{V\setminus\{i\}}y^i) K^{[i]}(x^{V\setminus\{i\}}y^i,x^i)
	h(x^{V\setminus\{i\}}y^i),
\end{align*}
and the right-hand side of this expression is equal to $\mu h$, as clearly $\sum_{x^i\in\mathbb{S}_0} K^{[i]}(x^{V\setminus\{i\}}y^i,x^i)=1$.\\
For $i\in V$, $x\in\mathbb{S}$, we have
\begin{align*}
	| K^{[i]}_{x^{V\setminus\{i\}}0^i}(\{0\}) - K^{[i]}_{x^{V\setminus\{i\}}1^i}(\{0\}) |
	&= | 1-\alpha^i(x^{V\setminus\{i\}}0^i) - \alpha^i(x^{V\setminus\{i\}}1^i) |
	= e^{-\beta |\Delta_i f(S(x^{V\setminus \{i\}} 0^i))|}.
\end{align*}
For $i,j\in V$, $i\neq j$, $x\in\mathbb{S}$, we have
\begin{align*}
	| K^{[i]}_{x^{V\setminus \{i,j\}}0^i0^j}(\{0\}) \!-\! K^{[i]}_{x^{V\setminus \{i,j\}}0^i1^j}(\{0\}) |
	&= | 1 \!\land\! e^{\beta \Delta_if(S(x^{V\setminus \{i,j\}}0^i0^j))}
	\!-\! 1\!\land\! e^{\beta \Delta_i f(S(x^{V\setminus \{i,j\}}0^i1^j))} |,\\
	| K^{[i]}_{x^{V\setminus \{i,j\}}1^i0^j}(\{0\}) \!-\! K^{[i]}_{x^{V\setminus \{i,j\}}1^i1^j}(\{0\}) |
	&= | 1 \!\land\! e^{-\beta \Delta f_i(S(x^{I\setminus \{i,j\}}0^i0^j))}
	\!-\! 1 \!\land\! e^{-\beta \Delta_i f(S(x^{I\setminus \{i,j\}}0^i1^j))} |.
\end{align*}
Hence, the Dobrushin coefficients \eqref{def:dobrushincoefficients} read
\begin{align*}
	&\widetilde C_{ij}= \max_{x\in\mathbb{S}}
	| K^{[i]}_{x^{V\setminus\{j\}}0^j}(\{0\}) - K^{[i]}_{x^{V\setminus\{j\}}1^j}(\{0\}) | =\\
	&
	\begin{cases}
	\displaystyle\max_{S\subseteq V : S\not\ni i} e^{- \beta |\Delta_i f(S)| }
	&\text{if } i= j,\\
	\displaystyle \max_{S\subseteq V : S\not\ni i,j}
	\left| 1 \land e^{\beta \Delta_i f(S)}
	- 1\land e^{\beta \Delta_i f(S\cup\{j\})} \right|
	\vee
	\left| 1 \land e^{-\beta \Delta_i f(S)}
	- 1 \land e^{-\beta \Delta_i f(S\cup\{j\})} \right| 
	&\text{if } i\neq j.
	\end{cases}
\end{align*}
\end{proof}

Below are the algorithms considered in Section \ref{sec:Fast mixing local MCMC algorithms for discrete point processes}.
\begin{algorithm}[H]
	Sample $S \subseteq V$ from the distribution $\rho$;\\
	\For{$k=1,\ldots,nm$}{
	Sample $i\in V$ uniformly;\\
	Draw $C\in\{0,1\}$ with
	$\mathbf{P}(C=0) = \frac{1}
	{1+e^{\Delta_i f(S\setminus\{i\})}}$;\\
	If $C=0$ then set $S \leftarrow S\setminus\{i\}$,
	else set $S \leftarrow S\cup\{i\}$;
	}
	\textbf{Output:} $S\subseteq V$ that is distributed according to $\rho T_r^m$.
	\caption{Random scan Gibbs sampler}
	\label{alg:gibbssampler}
\end{algorithm}

\begin{algorithm}[H]
	Sample $S \subseteq V$ from the distribution $\rho$;\\
	\For{$k=1,\ldots,nm$}{
	Sample $i\in V$ uniformly;\\
	\eIf{$i\in S$}{
		draw $C\in\{0,1\}$ with
		$\mathbf{P}(C=0) = 1 \land e^{-\beta\Delta_i f(S\setminus\{i\})}$;
		if $C=0$, then set $S \leftarrow S\setminus\{i\}$;\;
		}{
		draw $C\in\{0,1\}$ with
		$\mathbf{P}(C=1) = 1 \land e^{\beta\Delta_i f(S\setminus\{i\})}$;
		if $C=1$, then set $S \leftarrow S\cup\{i\}$;\;
		}
  	}
	\textbf{Output:} $S\subseteq V$ that is distributed according to $\rho T_r^m$.
	\caption{Random scan Metropolis-Hastings}
	\label{alg:Metropolis-Hastings}
\end{algorithm}

\begin{proof}[Proof of Lemma \ref{lem:comparison GS MH}]
The first statement is trivial as $C_{ii} =0$ and $\widetilde C_{ii}\ge 0$ by definition. To prove the second statement, we show that the following holds for each pair of real numbers $a,b$ with $b \le a$:
$$
	\frac{1}{4} h(a,b) \le
	\frac{e^a-e^b}{(1+e^a)(1+e^b)}
	\le h(a,b),
$$
where
$
	h(a,b) :=
	(1\land e^a - 1\land e^b) 
	\vee
	(1\land e^{-b} - 1\land e^{-a}).
$
We distinguish three cases.

If $0\le b\le a$, then $h(a,b)=e^{-b}-e^{-a}$, and we have
$$
	\frac{1}{4} h(a,b)
	= \frac{e^{-b}-e^{-a}}{4}
	= \frac{e^a-e^b}{4 e^a e^b}
	\le
	\frac{e^a-e^b}{(1+e^a)(1+e^b)}
	\le \frac{e^a-e^b}{e^a e^b}
	= e^{-b}-e^{-a} = h(a,b).
$$

If $b \le 0\le a$, then $h(a,b) = (1-e^b)\vee (1-e^{-a})= 1-e^{(-a) \land b}$. As
$$
	\frac{1-e^{2((-a) \land b)}}{2}
	\le \frac{1-e^{2((-a) \land b)}}
	{1+e^{(-a) \land b - (-a) \vee b}}
	\le 1-e^{(-a) \land b + (-a) \vee b}
	= 1-e^{-a + b},
$$
where for the second inequality we used that
$
	\frac{1-e^{2x}}{1+e^{x-y}} \le 1-e^{x+y}
$
for $x\le y\le 0$,
it follows that
$$
	\frac{1}{4} h(a,b)
	\le \frac{1}{2} \frac{1-e^{(-a) \land b}}{1+e^{(-a) \vee b}}=
	\frac{1}{2} \frac{1-e^{2((-a) \land b)}}{(1+e^{-a})(1+e^b)}
	\le \frac{1-e^{-a + b}}{(1+e^{-a})(1+e^b)} 
	= \frac{e^a-e^b}{(1+e^a)(1+e^b)}.
$$
Moreover,
$$
	\frac{e^a-e^b}{(1+e^a)(1+e^b)}
	= \frac{1-e^{-a + b}}{(1+e^{-a})(1+e^b)}
	\le \frac{1-e^{2((-a) \land b)}}{(1+e^{(-a) \land b})(1+e^{(-a) \vee b})}
	= \frac{1-e^{(-a) \land b}}{1+e^{(-a) \vee b}}
	\le h(a,b).
$$

Finally, if $b\le a\le 0$, then $h(a,b)=e^{a}-e^{b}$, and we have
$$
	\frac{1}{4} h(a,b)
	= \frac{e^a-e^b}{4}
	\le \frac{e^a-e^b}{(1+e^a)(1+e^b)}
	\le e^a-e^b = h(a,b).
$$
\end{proof}

\section{Applications, proofs}\label{sec:Proofs of Applications results}
Below are the proofs of the results presented in Section \ref{sec:applications}.\\

\begin{proof}[Proof of Lemma \ref{lem:Gibbs sampler fast mixing}]
For each $i,j\in V$, define
\begin{align*}
	R_{ij}
	&:=
	\begin{cases}
	0
	&\text{if } i= j,\\
	\displaystyle 
	\alpha(\beta)
	\max_{S\subseteq V : S\not\ni i,j}
	\left|
	1
	 -
	e^{\beta \Delta_j\Delta_if(S)}
	\right|
	&\text{if } i\neq j,
	\end{cases}
\end{align*}
with $\alpha(\beta) := \max_{i\in V} \max_{S\subseteq V\setminus\{i\}} e^{-\beta \Delta_if(S)}$. 
The first part of the Lemma follows immediately from Theorem \ref{thm:fastMCMC} and Remark \ref{rem:on fast mixing}, once we prove that $C\le R$, element-wise, where $C$ is the Dobrushin matrix defined in Lemma \ref{lem:Gibbs sampler}. In fact, condition \eqref{def:Dobrushin condition} yields
$$
	\| R \|_\infty = \max_{i\in V} \sum_{j\in V} R_{ij}
	\le \alpha(\beta) \max_{i\in V} \sum_{j\in V\setminus\{i\}} 
	\max_{S\subseteq V : S\not\ni i,j}
	\left|
	1
	 -
	e^{\beta \Delta_j\Delta_if(S)}
	\right|
	\le \gamma < 1,
$$
which corresponds to \eqref{fastmixinggeneral}. For each $i\in V$ we clearly have $C_{ii}=R_{ii}=0$. Henceforth, fix $i,j\in V$, $i \neq j$. As
\begin{align*}
	 \frac{|e^{\beta \Delta_i f(S)}\!-\!e^{\beta \Delta_i f(S\cup\{j\})}|}
	{(1\!+\!e^{\beta \Delta_i f(S)})(1\!+\!e^{\beta \Delta_i f(S\cup\{j\})})}
	\!&\le\!
	 |e^{-\beta \Delta_if(S\cup\{j\})} 
	 \!-\!
	 e^{-\beta \Delta_if(S)}|
	\!=\! e^{-\beta \Delta_if(S\cup\{j\})}
	\left|
	1
	 \!-\!
	 e^{\beta \Delta_j\Delta_if(S)} 
	\right| \!,
\end{align*}
taking the maximum over $S\subseteq V, S\not\ni i,j$, on both hand sides immediately yields $C_{ij}\le R_{ij}$. The proof of the Lemma is concluded once noticed that $f$ being submodular means that $\Delta_j\Delta_i(S) \le 0$ for each $i,j\in V$, $i\neq j$, $S\subseteq V$, $S\not\ni i,j$, and it implies $\Delta_if(V\setminus\{i\}) \le \Delta_i f(S)$ for each $i\in V$, $S\subseteq V$, $S\not\ni i$, so that $\alpha(\beta) = \max_{i\in V} e^{-\beta \Delta_if(V\setminus\{i\})}$.
\end{proof}

\begin{proof}[Proof of Corollary \ref{cor:facility}]
For each $i\in V$, $S\subseteq V$, $i\not\in S$, we have
$
	\Delta_i g(S)
	= \sum_{k=1}^m (L_{ki} - \max_{\ell \in S} L_{k\ell}) \vee 0,
$
and, clearly, $\Delta_i f(S) = \Delta_i g(S) - \lambda$.
The function $g$ is submodular as, for each $i,j\in V$, $i\neq j$, $S\subseteq V$, $i,j\not\in S$, we have
$$
	\Delta_i g(S\cup \{j\})
	=
	\sum_{k=1}^m (L_{ki} - \max_{\ell \in S\cup\{j\}} L_{k\ell}) \vee 0 
	\le \sum_{k=1}^m (L_{ki} - \max_{\ell \in S} L_{k\ell}) \vee 0
	=
	\Delta_i g(S),
$$
so also the function $f$ is submodular, as $\Delta_j\Delta_i f(S) = \Delta_j\Delta_i g(S) \le 0$.
Note that for each $i,j\in V$, $i\neq j$, $S\subseteq V$, $i,j\not\in S$, we can write
\begin{align*}
	\Delta_j\Delta_i f(S)
	&= \sum_{k=1}^m (L_{ki} - \max_{\ell \in S\cup\{j\}} L_{k\ell}) \vee 0
	- \sum_{k=1}^m (L_{ki} - \max_{\ell \in S} L_{k\ell}) \vee 0\\
	&= \sum_{k=1}^m \left(
	L_{ki} \wedge \max_{\ell \in S} L_{k\ell}
	- L_{ki} \wedge \max_{\ell \in S\cup\{j\}} L_{k\ell}
	\right)\\
	&= \sum_{k=1}^m \left(
	L_{ki} \wedge \max_{\ell \in S} L_{k\ell}
	- L_{ki} \wedge (L_{kj} \vee \max_{\ell \in S} L_{k\ell})
	\right)\\
	&= \sum_{k=1}^m
	\left\{
	(( \max_{\ell \in S} L_{k\ell} - L_{ki})\vee 0)
	1_{L_{ki} < L_{kj}}
	+
	(( \max_{\ell \in S} L_{k\ell} - L_{kj})\vee 0)
	1_{L_{ki} \ge L_{kj}}
	\right\},
\end{align*}
where we used, in order, the following two equalities holding for any real numbers $x,y,z$:
\begin{align*}
	(x-y)\vee 0 &= x - y\wedge x,
	\qquad
	x \wedge z - x \wedge (y\vee z) 
	&= ((z-x)\vee 0) \, 1_{x<y} + ((z-y)\vee 0) \,1_{x\ge y}.
\end{align*}
From the expression above it is clear that for each $i,j\in V$, $i \neq j$, we have
$
	\Delta_j\Delta_i f(S) \le \Delta_j\Delta_i f(S')
$
if $S\subseteq S'\subseteq V$, from which it follows that
$
	\min_{S\subseteq V : S\not\ni i,j} \Delta_j\Delta_if(S)
	= \Delta_j\Delta_if(\varnothing)
	= -\sum_{k=1}^m L_{ki} \wedge L_{kj}.
$
As
$
	\min_{i \in V}
	\Delta_if(V\setminus\{i\})
	\ge -\lambda,
$
the left hand side of \eqref{fastmixingsubmodular} is upper bounded by
\begin{align*}
	e^{\lambda\beta} \max_{i\in V} \sum_{j\in V\setminus\{i\}} 
	\left(
	1
	 -
	e^{-\beta \sum_{k=1}^m L_{ki} \wedge L_{kj}}
	\right).
\end{align*}
\end{proof}

\begin{proof}[Proof of Corollary \ref{cor:graphcut}]
It is east to check that for each $i,j\in V$, $i\neq j$, and $S\subseteq V$ so that $i,j\not\in S$ we have
$
	\Delta_if(S) = b \sum_{\ell \in V} L_{i\ell} - 2c \sum_{\ell\in S} L_{i\ell}
$
and
$
	\Delta_j\Delta_i f(S) = - 2 cL_{ij} \le 0,
$
from which it follows that $f$ is submodular. As
$
	\min_{i \in V} \Delta_if(V\setminus\{i\}) 
	= (b - 2c) \min_{i \in V} \sum_{\ell\in V\setminus\{i\}} L_{i\ell},
$
the left hand side of \eqref{fastmixingsubmodular} is upper bounded by
\begin{align*}
	e^{\beta (2c-b) \min_{i \in V} \sum_{\ell\in V\setminus\{i\}} L_{i\ell}}
	\max_{i\in V} \sum_{j\in V\setminus\{i\}} 
	\left(
	1 - e^{-2 c \beta L_{ij}}
	\right).
\end{align*}
\end{proof}

\begin{proof}[Proof of Corollary \ref{cor:ddp}]
For $i\not\in S\subseteq V$ we have
$$
	\det L_{S\cup \{i\}} = (\det L_{S})(L_{ii} - L_{i,S} L^{-1}_{S} L_{S,i}),
$$
where $L_{i,S} := (L_{ij})_{j\in S}\in\mathbb{R}^{1\times |S|}$ and $L_{S,i} := (L_{ji})_{j\in S}\in\mathbb{R}^{|S|\times 1}$.
It follows that
\begin{align*}
	f(S\cup\{i\}) 
	= \log\det L_{S\cup \{i\}}
	= f (S) + \log (L_{ii} - L_{i,S} L^{-1}_{S} L_{S,i}),
\end{align*}
and the marginal gain reads
$
	\Delta_if(S) = \log (L_{ii} - L_{i,S} L^{-1}_{S} L_{S,i})
	= \log \operatorname{Var}(X^i|X^S).
$
Analogously, if $i\not\in S$, $i\neq j$, we find
$
	\Delta_if(S\cup \{j\}) = \log (L_{ii} - L_{i,S\cup\{j\}} L^{-1}_{S\cup\{j\}} L_{S\cup\{j\},i}).
$
Recall that
$$
	L^{-1}_{S\cup\{j\}} =
	\left( 
	\begin{array}{cc}
	B & C \\
	C^T & \frac{1}{d}
	\end{array} 
	\right),
$$
with
$
	d := L_{jj} - L_{j,S} L^{-1}_{S} L_{S,j},
$
$
	B := L^{-1}_{S} + \frac{1}{d} L^{-1}_{S} L_{S,j} L_{j,S} L^{-1}_{S},
$
and
$
	C := -\frac{1}{d} L^{-1}_{S} L_{S,j}.
$
We get
\begin{align*}
	L_{i,S\cup\{j\}} L^{-1}_{S\cup\{j\}} L_{S\cup\{j\},i}
	&= 
	\left( L_{i,S}, L_{ij} \right)
	\left( 
	\begin{array}{cc}
	B & C \\
	C^T & \frac{1}{d}
	\end{array} 
	\right)
	\left( 
	\begin{array}{c}
	L_{S,i} \\
	L_{ji}
	\end{array} 
	\right)
	= L_{i,S} BL_{S,i} + 2 L_{ji}L_{i,S}C + \frac{L_{ij}^2}{d},
\end{align*}
where we used that $L_{ij}=L_{ji}$, and that
$
	 L_{ij}C^TL_{S,i} = L_{ij}(L_{S,i}^TC)^T = L_{ji}(L_{i,S}C)^T = L_{ji}L_{i,S}C.
$
Consequently,
\begin{align*}
	&L_{i,S\cup\{j\}} L^{-1}_{S\cup\{j\}} L_{S\cup\{j\},i}\\
	&\qquad\qquad= L_{i,S} \left( L^{-1}_{S} + \frac{1}{d} L^{-1}_{S} L_{S,j} L_{j,S} L^{-1}_{S} \right) L_{S,i} + 2 L_{ji}L_{i,S}\left(-\frac{1}{d} L^{-1}_{S} L_{S,j}\right) + \frac{L_{ij}^2}{d}\\
	&\qquad\qquad= L_{i,S} L^{-1}_{S} L_{S,i} 
	+ \frac{\left( L_{ij} - L_{i,S} L^{-1}_{S} L_{S,j} \right)^2}{L_{jj} - L_{j,S} L^{-1}_{S} L_{S,j}},
\end{align*}
where we used that
$
	L_{j,S} L^{-1}_{S} L_{S,i} 
	= (L_{j,S} L^{-1}_{S} L_{S,i})^T
	= L_{S,i}^T (L^{-1}_{S})^T L_{j,S}^T
	= L_{i,S} (L^{T}_{S})^{-1} L_{S,j}
	= L_{i,S} L^{-1}_{S} L_{S,j}.
$
Therefore, we find
$$
	\Delta_j\Delta_i f(S) = 
	\log \frac{L_{ii} - L_{i,S\cup\{j\}} L^{-1}_{S\cup\{j\}} L_{S\cup\{j\},i}}
	{L_{ii} - L_{i,S} L^{-1}_{S} L_{S,i}}
	= \log (1 - \rho(i,j|S)^2).
$$
As $0\le \rho(i,j|S)\le 1$ for each $i,j\in V$, $S\subseteq V$, it follows that $\Delta_j\Delta_i f(S)\le 0$ and $f$ is submodular.
From \eqref{def:condmutualinfo} we have
$
	\Delta_j\Delta_i f(S) = 
	-2 I(X^i;X^j|X^S),
$
and the left hand side of \eqref{fastmixingsubmodular} is equal to
\begin{align*}
	\max_{i\in V} \frac{1}{\operatorname{Var}(X^i|X^{V\setminus\{i\}})^{\beta}}
	\max_{i\in V} \sum_{j\in V\setminus\{i\}} 
	\max_{S\subseteq V : S\not\ni i,j}
	\left(1 - e^{-2 \beta I(X^i;X^j|X^S)}\right).
\end{align*}
The case $\beta=1$ follows immediately using \eqref{def:condmutualinfo}.
\end{proof}

\begin{proof}[Proof of Corollary \ref{cor:decomposable}]
First note that since $d\phi_A(x)/dx\geq c$ for each $x\in[0,n]$ it follows that the discrete differences are all bounded, i.e., $\phi_A(x+1) - \phi_A(x)\geq c$ for each $x\in \{0,\ldots,n-1\}$ and $A\in\mathcal{S}$. Similarly, since $d^2\phi_A(x)/dx^2\geq c'$ for each $x\in[0,n]$ we have $\phi_A(x+2) - 2 \phi_A(x+1)+ \phi_A(x)\geq c'$ for each $x\in \{0,\ldots,n-2\}$ and $A\in\mathcal{}S$. Now, for each $i\in V$, $S\subseteq V$, $i\not\in S$ we have
$$
	\Delta_i f(S) = \sum_{A\in\mathcal{S}: A\ni i} 
	\left\{\phi_A(|A\cap (S\cup\{i\})|) - \phi_A(|A\cap S|)\right\},
$$
and each $i,j\in V$, $i\neq j$, $S\subseteq V$, $i,j\not\in S$ we have
\begin{align*}
	&\Delta_j\Delta_i f(S)\\
	&=\!\sum_{A\in\mathcal{S}: A\ni i,j} \!\!\!\!\!
	\left\{
	\phi_A(|A\cap (S\cup\{i,j\})|) \!-\! \phi_A(|A\cap (S\cup\{j\})|)
	\!-\! \phi_A(|A\cap (S\cup\{i\})|) \!+\! \phi_A(|A\cap S|)
	\right\}\\
	&=\!\sum_{A\in\mathcal{S}: A\ni i,j} 
	\left\{
	\phi_A(|A\cap S|+2) - 2 \phi_A(|A\cap S|+1) + \phi_A(|A\cap S|)
	\right\} \le 0,
\end{align*}
where the inequality comes from the concavity of each $\phi_A$, as
$\phi_A(x+2) - 2\phi_A(x+1) + \phi_A(x) \le 0$ for $x\ge 0$. As $\Delta_j\Delta_i f(S) \le 0$, $f$ is submodular. In particular, note that if $i,j$ are such that there is no $A\in\mathcal{S}$ that satisfies $A\ni i,j$, then the previous expression yields
$
	\Delta_j\Delta_i f(S) = 0.
$
As for each $i\in V$ we have
$
	\Delta_i f(V\setminus\{i\}) = \sum_{A\in\mathcal{S}: A\ni i} 
	\left\{\phi_A(|A|) - \phi_A(|A|-1)\right\}
	\ge c |\{A\in \mathcal{S}: A\ni i\}|,
$
and for each $i,j\in V$, $i\neq j$, $S\subseteq V$, $i,j\not\in S$ we have
$
	\Delta_j\Delta_if(S) \ge c',
$
then the left hand side of \eqref{fastmixingsubmodular} is upper bounded by
\begin{align*}
	&(1-e^{c'\beta})
	e^{-c \beta \min_{i\in V}  |\{A\in \mathcal{S}: A\ni i\}|} 
	\max_{i\in V} |\{j\in V\setminus\{i\}: i,j\in A \text{ for some }A\in\mathcal{S}\}|\\
	&\qquad\qquad\le (1-e^{c'\beta})
	e^{-c \beta \min_{i\in V}  |\{A\in \mathcal{S}: A\ni i\}|} 
	\max_{i\in V} \left|\bigcup_{A\in\mathcal{S}:A\ni i} A\right|.
\end{align*}
\end{proof}

\end{document}